\def\namedlabel#1#2{\begingroup
  #2%
  \def\@currentlabel{#2}%
  \phantomsection\label{#1}\endgroup
}
\newcommand{\trace}{\operatorname{trace}}
\newcommand{\range}{\mathcal{R}}
\newcommand{\Id}{\mathbf{I}}
\newcommand{\dettwo}{\operatorname{det}_2}
\renewcommand{\S}{\mathbf{S}}
\newcommand{\bzero}{\mathbf{0}}
\newcommand{\bA}{\mathbf{A}}
\newcommand{\bM}{\mathbf{M}}
\newcommand{\bU}{\mathbf{U}}
\newcommand{\bJ}{\mathbf{J}}
\newcommand{\bS}{\mathbf{S}}
\newcommand{\bH}{\mathbf{H}}
\newcommand{\bx}{\mathrm{x}}
\newcommand{\bu}{\mathrm{u}}
\newcommand{\bbP}{\mathbb{P}}
\newcommand{\bbQ}{\mathbb{Q}}
\newcommand{\bbN}{\mathbb{N}}
\newcommand{\bbR}{\mathbb{R}}
\newcommand{\cP}{\mathcal{P}}
\newcommand{\cH}{\mathcal{H}}
\newcommand{\cX}{\mathcal{X}}
\newcommand{\cN}{\mathcal{N}}
\newcommand{\bI}{\mathbf{I}}
\newcommand{\m}{\mathrm{m}}
\newcommand{\bbGamma}{{\mathpalette\makebbGamma\relax}}
\newcommand{\makebbGamma}[2]{%
  \raisebox{\depth}{\scalebox{1}[-1]{$\mathsurround=0pt#1\mathbb{L}$}}%
}
\DeclareMathOperator{\supp}{supp}
\theoremstyle{plain}
\newtheorem{definition}{Definition}[section]
\newtheorem{lemma}[definition]{Lemma}
\newtheorem{theorem}[definition]{Theorem}
\newtheorem{proposition}[definition]{Proposition}
\newtheorem{corollary}[definition]{Corollary}
\newtheorem{remark}[definition]{Remark}
\title{\bfseries Kernel Embeddings \\and the Separation of Measure Phenomenon}
\author[1]{\textsc{Leonardo V.\ Santoro}}
\author[2]{\textsc{Kartik G.\ Waghmare}}
\author[1]{\textsc{Victor M.\ Panaretos}}
\affil[ ]{\footnotesize{
\texttt{leonardo.santoro@epfl.ch}  
\qquad\,\,
\texttt{kartik.waghmare@stat.math.ethz.ch}
\qquad\,\,
\texttt{victor.panaretos@epfl.ch}}
\normalsize}
\affil[1]{Institut de Math\'ematiques\\École Polytechnique Fédérale de Lausanne}
\affil[2]{Departement Mathematik, ETH Zürich}
\begin{document}

\maketitle

\begin{abstract}
We prove that kernel covariance embeddings lead to information-theoretically perfect separation of distinct continuous probability distributions. 
In statistical terms, we establish that testing for the \emph{equality} of two non-atomic (Borel) probability measures on a locally compact uncountable Polish space is \emph{equivalent} to testing for the \emph{singularity} between two centered Gaussian measures on a reproducing kernel Hilbert space.  The corresponding Gaussians are defined via the notion of kernel covariance embedding of a probability measure, and the Hilbert space is that generated by the embedding kernel.  Distinguishing singular Gaussians is structurally simpler from an information-theoretic perspective than non-parametric two-sample testing, particularly in complex or high-dimensional domains. This is because singular Gaussians are supported on essentially separate and affine subspaces.  Our proof leverages the classical Feldman-H\'{a}jek dichotomy, and shows that even a small perturbation of a continuous distribution will be maximally magnified through its Gaussian embedding. This ``separation of measure phenomenon'' appears to be a blessing of infinite dimensionality, by means of embedding, with the potential to inform the design of efficient inference tools in considerable generality. The elicitation of this phenomenon also appears to crystallize, in a precise and simple mathematical statement, a core mechanism underpinning the empirical effectiveness of kernel methods.
\end{abstract}


\section{Introduction}
Two-sample hypothesis testing is a foundational statistical problem, arguably as old as the discipline itself. It enables the researcher to determine whether two populations differ significantly with respect to certain quantitative features, from representative data. Its origins can be traced at least as far back as {Karl Pearson}'s \cite{pearson1900} chi-squared test, initially developed for analyzing biological and clinical data, and William Sealy Gosset's  t-test \cite{student1908} developed to compare the yields of different crop treatments. These two tests are still widely taught and used today, and they established two-sample testing as a practical tool in experimental research. Two-sample hypothesis testing also forms the basis of the Analysis of Variance (ANOVA), which generalizes the concept to multi-sample comparisons. By the mid-20th century, the focus expanded beyond tests based on parametric models, to encompass situations wherethe researcher cannot or would rather not commit to a stringent model specification. Tests based on ranks, either in paired settings \citep[Wilcoxon]{wilcoxon1945} or unpaired settings \cite[Mann \& Whitney]{mannwhitney1947} exploited the order of the real line and offered robustness, becoming essential in clinical research, genetics, and industrial quality control. Methods such as the {Kolmogorov-Smirnov test} \cite{kolmogorov1933,smirnov1948table} also avoided parametric assumptions by means of what today is called an invariance principle. Such procedures fall under the umbrella of what has come to be known as \emph{non-parametric testing}.

Non-parametric testing becomes particularly challenging when the probability distributions involved are defined on a high-dimensional and/or complex domain since a much larger number of features need to be compared. The primary issue is that the non-parametric alternative hypothesis is too vague: there are simply far too many ways in which two distributions can differ in high dimensions and/or complex domains. 

Without knowing which kinds of deviations to target, it becomes difficult to
optimize the choice of test statistic. 
Yet such data sets are increasingly becoming the norm, not only in statistical applications but especially in the context of machine learning \cite{recht2019imagenet,hendrycks2019benchmarking,torralba2011unbiased}. Consequently, nonparametric tests often either aim to probe for the intrinsic structure of the data set (e.g., graph-based methods \cite{Henze88, chen2017new} and depth-based techniques \cite{zuo2000general, Wilcox01042003, Shojaeddin12}) or to embed the data in a new space, where differences are hopefully amplified (kernel-based methods \cite{ aronszajn1950theory, berlinet2011reproducing,gretton2012kernel}). Kernel methods, in particular, Maximum Mean Discrepancy (MMD) and its variants \cite{gretton2012optimal, liu2020learning,hagrass2024spectral, eric2007testing,shekhar2022permutation,chatterjee2024boosting}, are generally seen to provide state-of-the-art performance, and are used extensively in complex learning contexts. It is widely understood that this stems from the effectiveness of the so-called “kernel trick”, whereby applying linear methods to nonlinear embeddings of the data into an infinite-dimensional space, rather than the original data, leads to better performance. Yet a precise mathematical statement that can transparently explain why has apparently remained elusive.

The contribution of this paper is twofold. First, we elicit a ``separation of measure phenomenon'' in the form of a clean and rigorous  mathematical statement, crystallizing why kernel methods should perform so well (Theorem~\ref{thm:main} and Corollary~\ref{cor:main}). Second, we demonstrate that current implementations (based on mean embedding alone) do not make use of the separation of measure phenomenon and can thus miss the full potential of the kernel trick (Proposition~\ref{prop:main:mean}). And, that a more refined use of embeddings holds considerable further potential for two-sample testing (Theorem~\ref{thm:operat}), and possibly in other inferential settings. The key insight is that, when used appropriately, the kernel trick transforms (perhaps subtle) differences between arbitrary distributions into maximally separated Gaussian measures on the embedding space -- the Gaussians with moments corresponding to the embedding moments. Notably, our results hold in considerable generality, requiring only that the domain of the distributions in question be a locally compact uncountable Polish space, and the distributions be non-atomic (indeed, non-atomicity of the distributions automatically entails uncountability of the domain). Consequently, the result could serve as a basis for the design of powerful inference tools in a wide range of contexts. The obvious field of application is testing, but one can appreciate the potential in classification, generalisation bounds, and variational inference, to mention but a few. As a proof of concept, we mention that in follow-up work, Santoro \& Panaretos \cite{santoroLRT25} provide an in-depth study of tests exploiting the separation of measure phenomenon identified in the present paper, and report remarkable empirical gains in power relative to state-of-the-art testing methods.

\section{High-Level Overview of Our Contributions}

The key tool underpinning kernel methods, such as MMD, is the concept of \textit{kernel mean embedding}, which provides a non-parametric representation of probability distributions by mapping them into an RKHS. Formally, given a locally compact Polish  space $\cX$, let $k : \cX\times \cX\to \bbR$ be a bounded positive semidefinite kernel (henceforth \emph{kernel}), with the RKHS $\cH$, and consider the corresponding feature map $\bx \mapsto k_{\bx} \in \cH$, where $k_\bx = k(\bx,\cdot)$. Given a probability distribution $ \bbP $, the corresponding \emph{mean embedding} maps $ \bbP $ to a \textit{point} ({vector}) in the RKHS, namely the embedding's first moment, as follows:
\begin{equation}
    \label{eq:meanemb}
    \bbP\mapsto \m_{\bbP} := \int k_\bx \, d\bbP(\bx).
\end{equation}
This embedding provides a representation of the distribution in an infinite-dimensional space and enables comparisons between distributions in terms of the (flat and linear) Hilbert space geometry. In fact, it facilitates various statistical tasks, and has been extensively used for machine learning and data science purposes: see \cite{muandet2017kernel} for a review. In the same vein,\textit{ kernel covariance embeddings} \cite{fukumizu2004dimensionality, gretton2007kernel, bach2022information}  extend this concept to capture the second moment of the embedding. Just as kernel mean embeddings map distributions to elements of the RKHS, kernel covariance embeddings map to  linear operators on the RKHS: the (uncentered) 
 covariance embedding $\S_{\bbP}$ of a probability measure $\bbP$ is defined as
\begin{equation}\label{eq:covemb}
    \bbP\mapsto \S_{\bbP} := \int k_\bx\otimes k_\bx \, d\bbP(\bx) 
\end{equation} 
and constitutes a self-adjoint, positive-semidefinite linear operator of finite trace from $\cH$ to $\cH$. 

Once we have a mean embedding $\m_\bbP$ and a covariance embedding $\S_\bbP$, a natural next step is to associate to \( \bbP \) a \emph{Gaussian measure on the RKHS $\cH$} with those two moments, as in
$$ \bbP\mapsto \mathcal{N}(\m_{\bbP}, \S_{\bbP}).$$
We refer to this as the \textit{kernel Gaussian embedding} of the distribution $\bbP$, or simply as the \emph{Gaussian embedding}. Note that one can also define the centered embedding, $ \bbP\mapsto \mathcal{N}(\mathrm{0}, \S_{\bbP})$, and this distinction will play an important role. While conceptually straightforward, these embeddings will allow us to go beyond the functional structure of the RKHS, and to exploit the properties of Gaussian measures on Hilbert spaces: it is not so much the linear geometry of the embedding space, but rather the information geometry of these Gaussian embeddings that is key.

In particular, we will demonstrate that kernel embeddings make it possible to reformulate the classical two-sample problem -- testing for the equality of distributions -- in terms of testing the \emph{mutual singularity} of the corresponding Gaussian embeddings. Recall that two probability measures $\mu,\nu$ on a measurable space $\mathcal{X}$ are \emph{mutually singular} (denoted $\mu\perp\nu$) if they ``separate": if there exists a measurable set $A$ such that $\mu(A)=\nu(A^c)=0$, so that $A$ carries 
all the mass of $\nu$ but none of the mass of $\mu$. It follows that two mutually singular probability measures have supports that are \emph{essentially disjoint}, in that their intersection is a null set under at least one of the two probability measures.  Of course, two probability measures can be distinct, without being mutually singular. Mutual singularity represents an extreme case where the two measures are maximally separated in an information-theoretic sense (neither measure has density with respect to the other). On the other end, we say that $\mu$ and $\nu$ are \emph{equivalent} (denoted $\mu\sim\nu$) when they share the same support: for any measurable $B$, we have $\mu(B)=0\iff \nu(B)=0$. A fundamental result in the theory of Gaussian measures states that Gaussians are either mutually equivalent or mutually singular, with no intermediate case, and the dichotomy is governed by precise criteria.

Leveraging this result, known as the Feldman-H\'{a}jek theorem, our main contribution is to show that kernel Gaussian embeddings lead to separation of measure in the following sense:

\begin{center} \emph{two non-atomic probability measures are distinct\\[0.5em] if and only if\\[0.5em] the corresponding kernel Gaussian embeddings are \emph{mutually singular}}
\end{center}

The result holds true whether we use the centered (Theorem~\ref{thm:main}) or the uncentered embedding (Corollary~\ref{cor:main}):
\begin{equation}\label{eq:main}
       \begin{split}
    \bbP \neq \bbQ \quad &\Longleftrightarrow \quad \mathcal{N}(\m_{\bbP}, \S_{\bbP}) \perp \mathcal{N}(\m_{\bbQ}, \S_{\bbQ})
    \\ \quad &\Longleftrightarrow \quad \mathcal{N}(\mathrm{0}, \S_{\bbP}) \perp \mathcal{N}(\mathrm{0}, \S_{\bbQ})
\end{split}
\end{equation}
Furthermore, we show that it is the covariance component of the embedding that elicits the separation-of-measure phenomenon, whereas the mean embedding alone does not suffice (Proposition \ref{prop:main:mean}).  Intuitively, kernel Gaussian embedding ``sharpens'' the alternative hypothesis by separating the embedded measures: it transforms it from a question of whether two distributions deviate, which can be rather nuanced in a nonparametric setting, to the considerably more transparent question of whether two Gaussians have essentially separate supports. Beyond non-atomicity, the original measures can be arbitrary Borel probability measures on a general locally compact uncountable Polish space, so our result holds very generally. This illustrates, in a precise sense, a kind of ``blessing of infinite dimensionality": suitable kernel embedding into an infinite-dimensional RKHS  separates continuous distributions perfectly, even when their differences are arbitrarily subtle. 

Of course, the obtained maximal separation comes at the cost of the embedded measures being supported on (subspaces of) an infinite-dimensional Hilbert space. Nevertheless, these measures are Gaussian, so it suffices to look at their empirical means/covariances -- which are $\sqrt{n}$-estimable in \emph{dimension independent} fashion via their empirical counterparts. And, once ``we know what to look for" we can target the alternative at the level of Gaussian embedding via the right information-theoretical tools: the corresponding Gaussian relative entropy (equivalently, a \emph{Gaussian likelihood ratio}), which converges or diverges according to whether we are under the null or alternative regime  (Theorem~\ref{thm:operat}). This suggests  that kernel-based methods hold the potential to yield even greater statistical efficiency when informed by our results.

\section{Background and Preliminaries}

We begin by summarizing basic notions in functional analysis and measure theory that are key to developing our results.

 \subsection*{Operator Theory}
Let $( \cH, \langle\cdot,\cdot\rangle_{\cH} )$ be a separable Hilbert space with 
induced norm $\|\cdot \|_{\cH}\::\: \cH \rightarrow [0,\infty)$, with $\mathrm{dim}(\cH)\in \bbN\cup\{\infty\}.$
Given $f,g\in\cH$, their \textit{tensor product} $f\otimes g \::\: \cH\rightarrow\cH$ is the linear operator defined by:
$$
(f\otimes g)u = \langle g,u\rangle_{\cH} f,\qquad u\in\cH.
$$ 
Given Hilbert spaces $\cH_1,\cH_2$ and a linear operator $\bA:\cH_1\rightarrow\cH_2$, we define its adjoint as the unique operator $\bA^*:\cH_2\rightarrow\cH_1$ such that $\langle \bA u, v \rangle_{\cH_2} = \langle u, \bA^* v \rangle_{\cH_1}$ for all $u\in \cH_1, v\in \cH_2$. We say that an operator $\bA:\cH\to\cH$ is \textit{self-adjoint} if $\bA=\bA^*$. 
We say that $\bA$ is \textit{non-negative definite} (or \textit{positive semidefinite}), and write $\bA\succeq 0$ if it is self-adjoint, and satisfies $\langle\bA h, h\rangle_{\cH} \geq 0$ for all $h\in\cH$. When the inequality is strict for all $x\in\cH\setminus\{0\}$ we call  $\bA$ \textit{positive definite} and write $\bA\succ 0$. 
We say that $\bA$ is \emph{compact} if for any bounded sequence $\{h_n\}\subset  \cH$, $\{\bA h_n\}\subset  \cH$ contains a convergent subsequence.
If $\bA$ is a non-negative, compact operator, then there exists a unique non-negative operator denoted by $\bA^{\sfrac{1}{2}}$ that satisfies $( \bA^{\sfrac{1}{2}} )^2 = \bA$.
The \textit{kernel} of $\bA$ is denoted by $\ker(\bA) = \{h\in\cH\::\: \bA h= 0\}$, and its \textit{range} by $\range(\bA)=\{\bA h \::\: h \in \cH\}$. 
We denote the \textit{trace} of an operator $\bA$, when defined, by $\trace(\bA)= \sum_{i\geq 1}\langle \bA e_i,e_i\rangle_{\cH} $, where $\{e_i\}_{i\geq 1}$ is an (arbitrary) Complete Orthonormal System (CONS) of $\cH$. We write
\begin{align*}
 & 
 \|\bA\|_{\text{op}(\cH)}:=\sup_{\|h\|_{\cH}=1}\|\bA h\|_{\cH}
  \\ &
  \|\bA\|_{\textnormal{HS}(\cH)}:= \sqrt{\trace(\bA^*\bA)}, 
 \\ &
 \|\bA\|_{\mathrm{tr}(\cH)}:= \trace(\sqrt{\bA^*\bA})
\end{align*}
 for the \emph{operator norm}, \emph{Hilbert-Schmidt norm}, and \emph{trace norm},
 respectively. An operator $\bA$ is said to be \emph{Hilbert-Schmidt} if $\|\bA\|_{\textnormal{HS}(\cH)} < \infty$ and trace-class if $\|\bA\|_{\mathrm{tr}(\cH)} < \infty$. 
One always has
 $
  \|\bA\|_{\text{op}(\cH)} \leq
 \|\bA\|_{\textnormal{HS}(\cH)} \leq \|\bA\|_{\mathrm{tr}(\cH)}
 $. We write $\bI$ for the identity operator on $\cH$.

\subsection*{Reproducing Kernel Hilbert Spaces} Let $\mathcal{X}$ be a locally compact Polish space. Consider a positive semidefinite kernel $k: \mathcal{X} \times \mathcal{X} \to \bbR$. Heuristically, the Reproducing Kernel Hilbert Space (RKHS) associated with  $k$, denoted $\cH = \cH(k)$,
is the Hilbert space of
$f: \mathcal{X}\to \bbR$ spanned by (possibly infinite) linear combinations of \emph{feature vectors} $\{k(\cdot,x_i)\}$.  Formally, let $\cH^0$ be the set of all finite linear combinations of feature vectors:
$$
\cH^0:=\operatorname{span}\{k(\cdot, x): x \in \mathcal{X}\}
$$
One can turn $\cH^0$ into a pre-Hilbert space by defining an inner product as follows: Given $f,g \in \cH^0$, with
\begin{equation*}
\begin{split}
f:=\sum_{i=1}^n a_i k\left(\cdot, x_i\right)\quad \text{ and } \quad g:=\sum_{j=1}^m b_j k\left(\cdot, y_j\right)\\ 
\text{then: } \qquad \langle f, g\rangle_{\cH^0}:=\sum_{i=1}^n \sum_{j=1}^m a_i b_j k\left(x_i, y_j\right),
\end{split}
\end{equation*}
where $a_1, \ldots, a_n$, $b_1, \ldots, b_m \in \mathbb{R}$ and $x_1, \ldots, x_n$, $y_1, \ldots, y_m \in \mathcal{X}$ for some  $n, m \in \bbN$.
The RKHS associated with $k(\cdot,\cdot)$ is then defined as the completion of $\cH^0$ with respect to $\|\cdot\|_{\cH^0}$, i.e, $\cH:=\overline{\cH^0}$.

 The distinguishing feature of an RKHS is that evaluation functionals $f \mapsto f(x)$ are continuous and satisfy the \emph{reproducing property}:
   $$
   f(x) = \langle f, k_x \rangle \quad \text{for all} \quad x \in \mathcal{X} \quad \text{and} \quad f\in \cH.
   $$
   where $k_x = k(x,\cdot)$.
   In other words, the value of the function $f$ at any point $x \in \mathcal{X}$ depends continuously on $f$ in the RKHS norm and can be recovered by taking the inner product of $f$ with the kernel function $k_x$.

An important property that a kernel may possess is $C_{0}(\mathcal X)$-\emph{universality}— or, for simplicity,  simply  \emph{universality}—which means that the associated RKHS is dense in $C_{0}(\mathcal X)$ with respect to the uniform norm.

\paragraph*{Mean and Covariance Embeddings.}
Kernel embeddings have emerged as powerful tools in machine learning and statistical inference. The key idea is to map probability measures to vectors or functions in a reproducing kernel Hilbert space (RKHS), thereby enabling the application of linear or multivariate methods directly to distributions \cite{muandet2017kernel}.

Let $\mathcal{X}$ be a locally compact Polish space, and denote by $\cP(\mathcal{X})$ the set of Borel probability measures on $\mathcal{X}$. Given a bounded positive semidefinite kernel $k : \mathcal{X} \times \mathcal{X} \to \mathbb{R}$ with associated RKHS $\cH$, the \emph{kernel mean embedding} of a measure $\bbP \in \cP(\mathcal{X})$ is defined (as in \eqref{eq:meanemb}) by the unique element $\m_\bbP \in \cH$ satisfying

$$
\langle \m_\bbP, f\rangle_{\cH} = \int_{\cX} f(u)~d\bbP(u), \qquad \forall f \in \cH.
$$

If further to being bounded $k$ is also \emph{universal} on $\cX$, then the mapping $\bbP \mapsto \m_\bbP$ is injective, so the embedding fully characterizes the distribution (the kernel is \emph{characteristic}).

In a similar spirit, the \emph{(uncentered) covariance kernel embedding} (or kernel covariance operator) of a measure $\bbP \in \cP(\cX)$ is defined (as in \eqref{eq:covemb}) as the linear operator $\S_\bbP : \cH \to \cH$ characterized by
\begin{align*}
\langle f, \S_\bbP g\rangle_{\cH}
&= \int \langle f, k_{u}\rangle_{\cH} \langle g, k_{u}\rangle_{\cH} \, d\bbP(u)
\\& = \int f(u)g(u) ~d\bbP(u), \qquad \forall f, g \in \cH.
\end{align*}
This defines $\S_\bbP$ as a self-adjoint, positive semidefinite, and trace-class linear operator on $\cH$ \cite{bach2022information}:
$$
\S_{\bbP}^\ast = \S_{\bbP}, \qquad \S_{\bbP} \succeq 0, \qquad \|\S_{\bbP}\|_{\mathrm{tr}(\cH)} < \infty.
$$
Furthermore, when $\bbP$  has full support on $\cX$ and the bounded kernel $k$ is universal, the operator $\S_\bbP$ is injective on $\cH$.

\subsection*{Gaussian Measures}
\begin{figure}[!b]
  \centering
  \begin{subfigure}[b]{0.320\textwidth}
    \includegraphics[width=\linewidth, trim={0cm 3cm 0cm 3cm},clip]{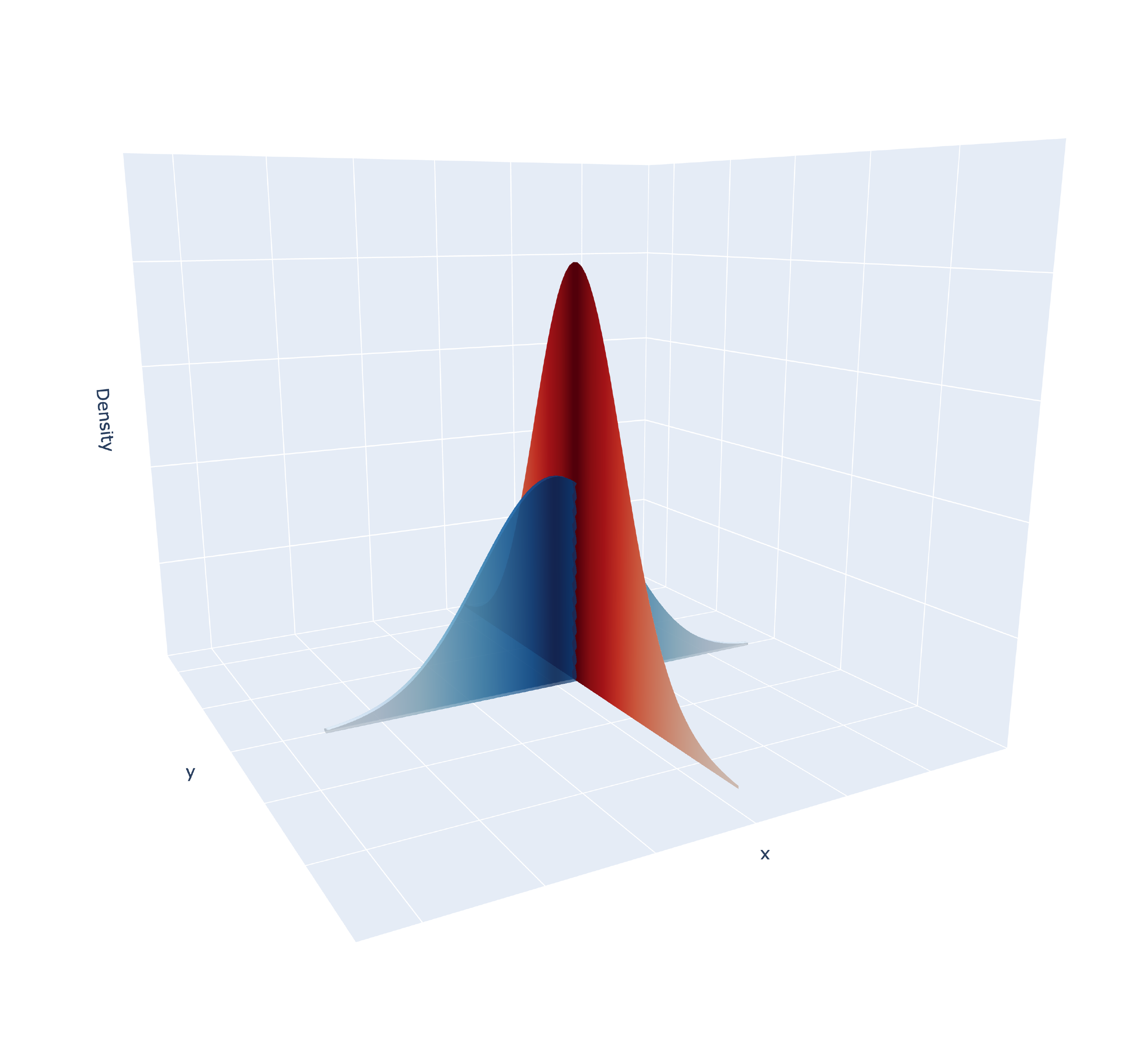}
  \end{subfigure}
  \begin{subfigure}[b]{0.320\textwidth}
\includegraphics[width=\linewidth, trim={0cm 3cm 0cm 3cm},clip]{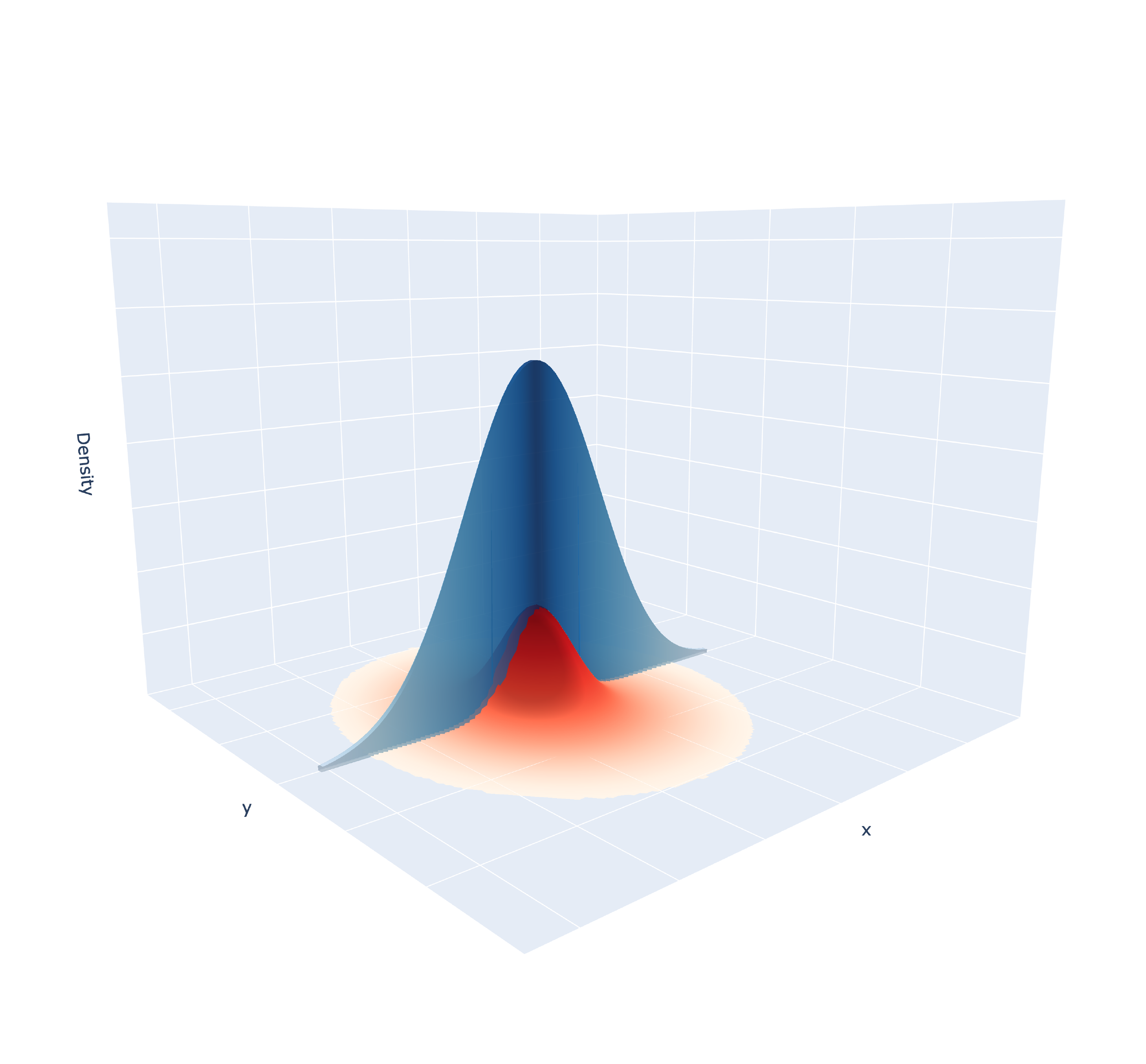}
  \end{subfigure}
  \begin{subfigure}[b]{.320\textwidth}
    \centering
    \includegraphics[width=\linewidth, trim={0cm 3cm 0cm 3cm},clip]{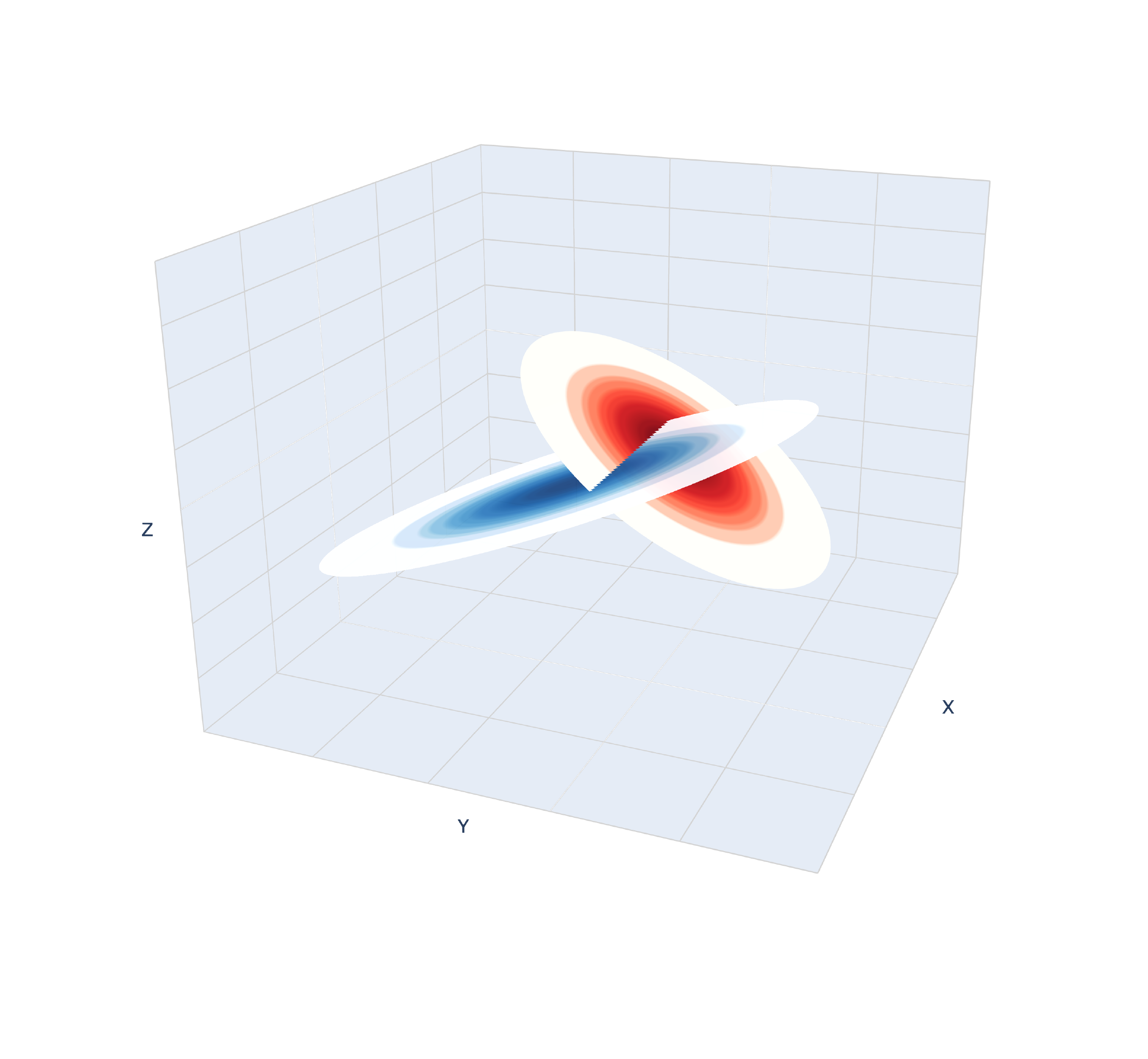}
  \end{subfigure}
  \caption{Since Gaussians are always supported on affine sets, there is structure to the way singularity can manifest.  In $\mathbb{R}^2$, for instance this can arise because the two Gaussians are supported on distinct lines (a) or because one is supported on the (full) plane, while the other on a line.  In  $\mathbb{R}^3$, mutual singularity of Gaussians can arise, for instance, when the measures are supported on distinct planes.}
\end{figure}
Recall that a Borel measure $\mu$ on a separable Hilbert space $\cH$ is Gaussian if and only if for a random element $\mathrm{X}$ with law $\mu$ and  for every $f \in \cH$, the scalar random variable 
$  \langle \mathrm{X}, f \rangle$ is a Gaussian random variable on $\bbR$. {A Gaussian measure $\mu$ on $\cH$ is determined by its mean vector $\m = \int \bu ~d \mu(\bu)\in\cH$ and covariance operator $\S=\int (\bu-\m)\otimes (\bu-\m)~d\mu(\bu)$. The latter is a self-adjoint, positive semidefinite, and trace-class operator $\cH\to\cH$}. Conversely, any $\cH$-vector and trace-class self-adjoint and positive semidefinite linear operator $\cH\to\cH$ give rise to a corresponding Gaussian, playing the respective role of mean and covariance. As with any two measures, two Gaussian measures $\mu, \nu$ on $\cH$  are said to be \textit{equivalent} (denoted $\mu \sim \nu$) if they have the same null sets:
$$
\mu(A) = 0 \iff \nu(A) = 0 \quad \text{for all measurable sets } A.
$$
This implies that the measures are absolutely continuous with respect to each other, the Radon-Nikodym derivatives $\frac{d\mu}{d\nu}$ and $\frac{d\nu}{d\mu}$ exist, and the two distributions share the same support. Two Gaussian measures are said to be \textit{singular} (denoted $\mu \perp \nu$) if for some measurable $A\subset \cH$
$$
\mu(A)= 0  \quad \text{and} \quad \nu(A^c) = 0.
$$
{This means that their supports are essentially disjoint (their intersection has measure zero under at least one of $\mu,\nu$), and there is no possible Radon-Nikodym derivative (density) of either with respect to the other.}

Given two Gaussian measures $\mu = \mathcal{N}(\m_{1}, \S_{1})$ and $\nu = \mathcal{N}(\m_{2}, \S_{2})$ on $\cH$, the \textit{Feldman-H\'{a}jek} Theorem \cite{hajek1958property,feldman1958equivalence} 
states that only two scenarios are possible:
\begin{align*}
    &\mu \sim \nu \qquad \text{(they are \textit{equivalent})}\\
    \qquad\mbox{or}\qquad
    &\mu \perp \nu \qquad  \text{(they are \textit{mutually singular}).}
\end{align*}
In particular, equivalence holds if and only if the following three conditions simultaneously hold true:
\begin{itemize}[itemsep=3pt, topsep=2pt]
    \item[(i)] They generate the same \emph{Cameron-Martin space}, i.e
        $\range(\S_{1}^{1/2}) = \range(\S_{2}^{1/2})=\range\left((\S_{1} + \S_{2})^{1/2}\right).$
                 
    \item[(ii)] The difference in their means lies in this common Cameron–Martin space, i.e.
        $
          \m_{1} - \m_{2} \in \range\left((\S_{1} + \S_{2})^{1/2}\right).
        $
    \item[(iii)] There exists a Hilbert-Schmidt operator $\bH$ with $\Id + \bH\succ 0$ such that $\S_1 = \S_2^{1/2}(\Id + \bH)\S_2^{1/2}$.
\end{itemize}

\noindent The Feldman-H\'{a}jek theorem is valid regardless of the dimensionality of the ambient Hilbert space $\cH$. But its most striking consequences manifest in the case of infinite dimensional $\cH$ (such as the RKHS of a universal kernel), where seemingly minute perturbations of a Gaussian vector can lead to singularity. For instance, let $X$ be a centered Gaussian vector of injective covariance on an infinite dimensional $\cH$. Then $\sqrt{1+\varepsilon}\, X$ and $X$ have singular laws for any $\varepsilon>0$: we have $\mathrm{cov}(\sqrt{1+\varepsilon}\,  X)=\mathrm{cov}^{1/2}(X)(\mathbf{I}+\varepsilon\mathbf{I})\mathrm{cov}^{1/2}(X)$, and $\mathbf{H}=\varepsilon \mathbf{I}$ fails to be Hilbert-Schmidt for any $\varepsilon>0$, no matter how small. Similarly, the laws of $X+\m$ and $X$ can become singular for an arbitrarily small translation $\|\m\|_{\cH}>0$, if $\m$ is not sufficiently smooth (in terms of the source condition $\m =\mathrm{cov}^{1/2}(X)g$ for some $g\in\cH$).  These are familiar phenomena in Gaussian diffusion modeling, where even a small mean shift and/or volatility change can lead to singularity of the induced path measures.

\section{Two-Sample Testing as Singular Gaussian Discrimination: A Separation of Measure Phenomenon}

\begin{figure}[h]
\centering
\begin{tikzpicture}
\node[draw, thick, inner sep=10pt, label=above:{$\mathcal{X}$ 
}] (box1) {
    \includegraphics[height=3cm, trim={2cm 2cm 2cm 2cm},clip]{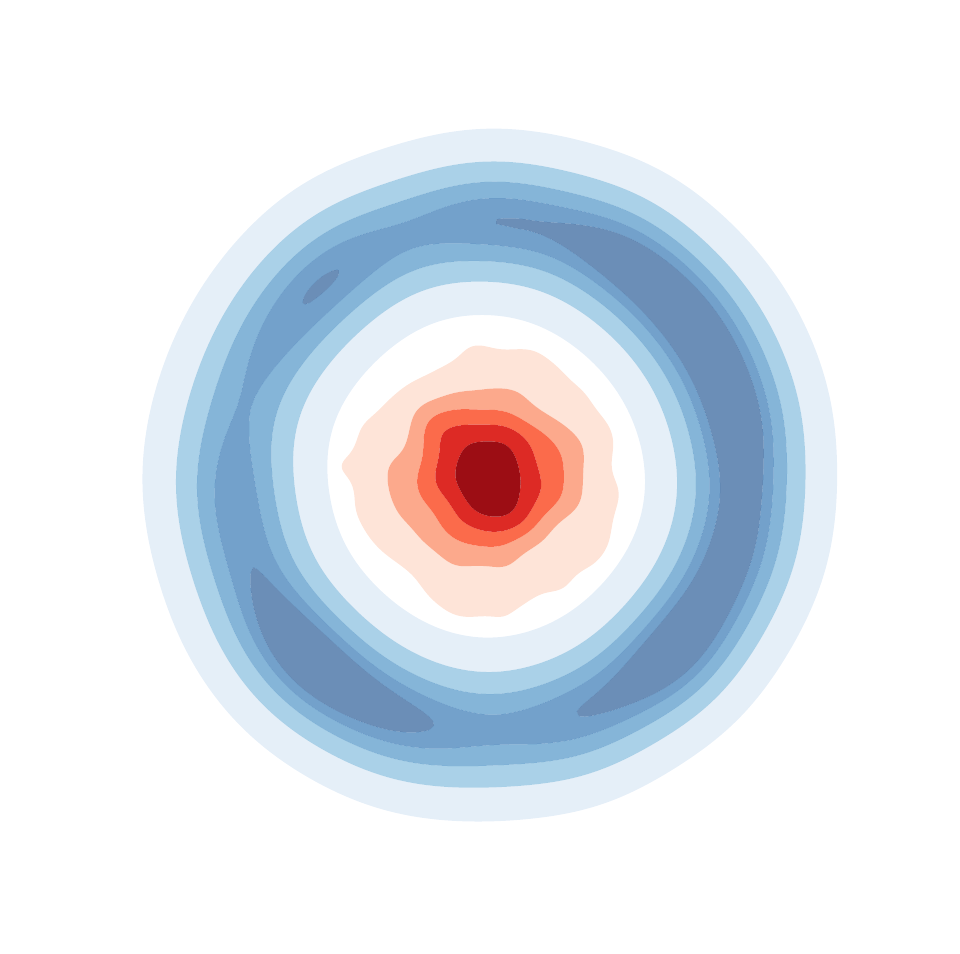}
};

\node[draw, thick, inner sep=10pt, right=.5cm of box1,
      label=above:{$\mathcal{H}$
      }] (box2) {
    \includegraphics[height=3cm, trim={8cm, 0 8cm 0},clip]{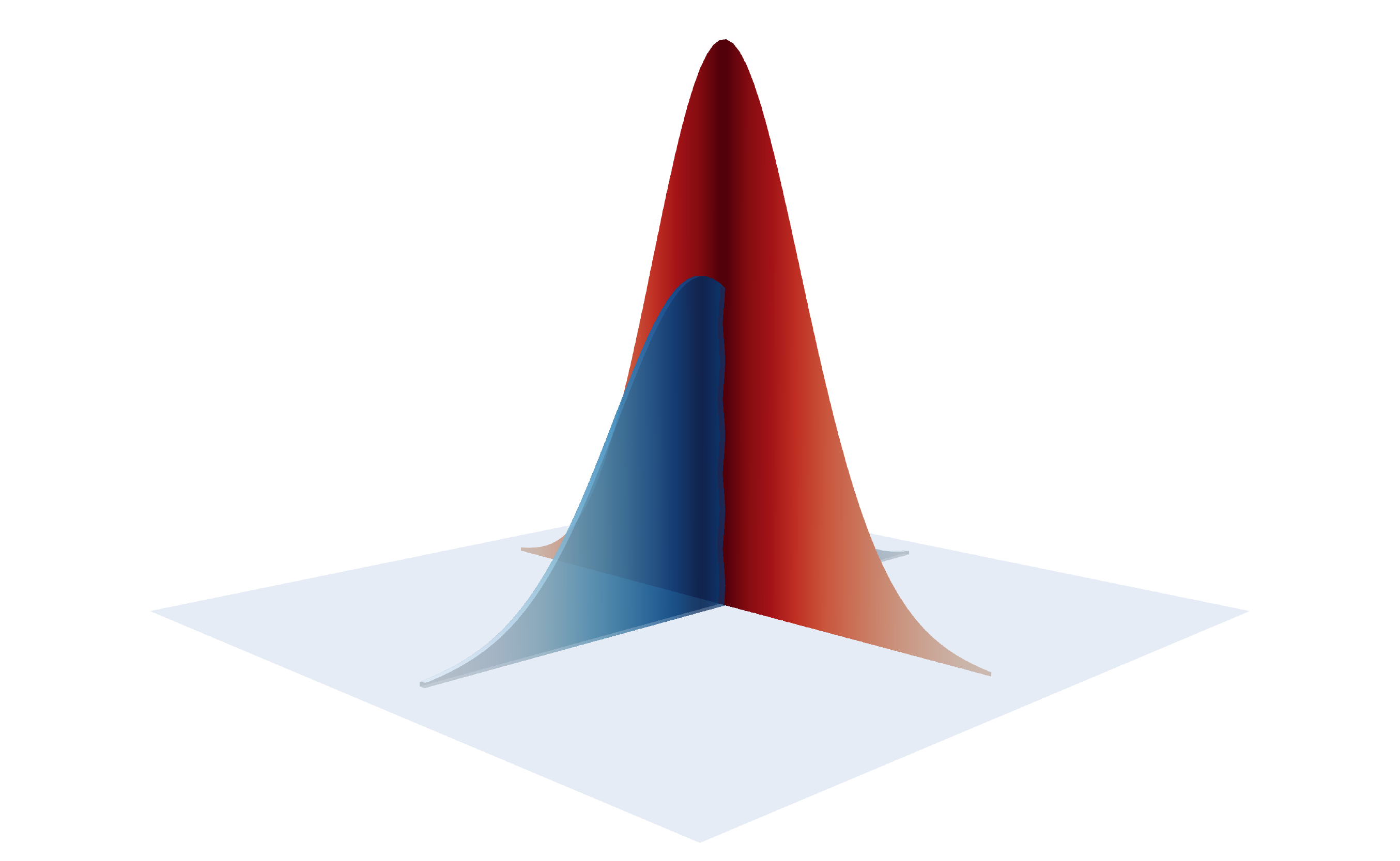}
};

\end{tikzpicture}
\caption{Gaussian embeddings \textit{magnify} distributional differences in a structured fashion: \textit{distinct} measures on $\mathcal{X}$ ($\mathbb{P},\mathbb{Q}$ on the left, whose contour lines are shown in red and blue respectively) are mapped to \textit{mutually singular} Gaussian measures on $\mathcal{H}$ ($\mathcal{N}_\mathbb{P}, \mathcal{N}_\mathbb{Q}$ on the right, in red and blue  respectively).}
\end{figure}

Consider the following (non-parametric) two-sample problem: given two (Borel) probability distributions $\bbP$ and $\bbQ$ on $\mathcal{X}$, we want to test the null hypothesis $H_0: \bbP = \bbQ$ against the alternative $H_1: \bbP \neq \bbQ$. {Apart from being non-atomic, the two probability measures can be arbitrary and need not satisfy any additional regularity conditions.}

In this section we state our main results, Theorem~\ref{thm:main} and Corollary~\ref{cor:main}. These state that the two-sample problem is equivalent to the problem of discriminating two singular Gaussian measures,  namely the two Gaussian measures corresponding to the embedding of $\bbP$ and $\bbQ$.
First, we consider the zero-mean Gaussian measures $\cN(\mathrm{0}, \S_{\bbP})$, $ \cN(\mathrm{0}, \S_{\bbQ})$ on $\cH$ with $\S_\bbP$, $\S_\bbQ$ as defined in \eqref{eq:covemb}: 
\begin{theorem}\label{thm:main}
    Let $\cX$ be a locally compact uncountable Polish space and $k: \cX \times \cX\to \bbR$ be a bounded $C_{0}(\cX)$-universal reproducing kernel thereon. If $\bbP,\bbQ$ are non-atomic (Borel) probability measures on $\cX$, then
$$
\bbP \neq \bbQ \quad\iff\quad \cN(\mathrm{0}, \S_{\bbP}) \perp \cN(\mathrm{0}, \S_{\bbQ}).
$$
\end{theorem}

{Of course, under the null hypothesis $H_0: \bbP = \bbQ$, the two embeddings are equal. The surprising aspect of the result is that, under the alternative, the two embeddings are not merely different, but vastly different from an information-theoretic perspective: they are mutually singular}. As a corollary to Theorem~\ref{thm:main}, we also obtain: 
\begin{corollary}\label{cor:main}
    Let $\cX$ be a locally compact uncountable Polish space and $k: \cX \times \cX\to \bbR$ be a bounded $C_{0}(\cX)$-universal reproducing kernel thereon. If $\bbP,\bbQ$ are non-atomic (Borel) probability measures on $\cX$ , then
    $$\bbP \neq \bbQ \quad\iff\quad \cN(\m_{\bbP}, \S_{\bbP}) \perp \cN(\m_{\bbQ}, \S_{\bbQ}).
    $$
\end{corollary}

These results illustrate a ``blessing of infinite-dimensionality": by suitably mapping into the space of Gaussian measures over an infinite-dimensional RKHS (of a universal kernel), we obtain a geometric representation that ``fully separates" the embedded measures. This considerably simplifies the task of distinguishing between distributions, reducing two-sample testing to testing for the \textit{essential disjointness} of the supports of Gaussians. Importantly, given samples from $\bbP$ and $\bbQ$, these Gaussian embeddings can be approximated by their empirical counterparts, uniformly in the dimension of $\mathcal{X}$. {From an information-theoretic perspective, the embedded Gaussians are ``infinitely separated" under the alternative regime: neither admits a density with respect to the other, and thus the Kullback-Leibler (KL) divergence of either with respect to the other is ill-defined (infinite). Nevertheless, the Feldman-H\'{a}jek criterion suggests that a projected KL divergence can be employed in order to operationalise the results via a quantitative version. Namely, one can consider a sequence of KL divergences, arising when the Gaussian measures are marginalised over a nested sequence of increasing subspaces, generated by an orthonormal basis.} 
Below, we write $T_\#\mu$ to denote the usual \emph{pushforward} of a Borel measure $\mu$ on $\mathcal{X}$ through a measurable map $T:\mathcal{X} \to \mathcal{X}$,  i.e. the Borel measure  defined by \((T_\#\mu)(B) := \mu\big(T^{-1}(B)\big)\) on Borel sets $B$. 
\begin{theorem}\label{thm:operat}
  Let $\bbP,\bbQ$ be non-atomic Borel probability measures on a locally compact uncountable Polish space $\cX$,  with $\bbQ \gg \bbP$, and let $k: \cX \times \cX\to \bbR$ be a bounded $C_0(\cX)$-universal reproducing kernel.
    Then:
    \begin{equation}
        \label{eq:operat}
        \lim_{N \to \infty}   
        D_{\textnormal{KL}}\left({\cP_N}_{\#} \cN_\bbQ \,||\,{\cP_N}_{\#}\cN_{\bbP}\right) 
        =
        \begin{cases}
            0, \: &\text{if} \quad \bbP=\bbQ,\\
            \infty, \: &\text{if} \quad \bbP\neq \bbQ.
        \end{cases}
    \end{equation}
    where $\cN_\bbP, \cN_\bbQ$ are either centered or uncentered Gaussian embeddings of $\bbP,\bbQ$, respectively, and $\cP_N = \sum_{i=1}^N e_i\otimes e_i$ is a sequence of projections with $\{e_i\}_{i\geq 1}$ comprising an orthonormal system of eigenvectors for $\S_\bbP$.
\end{theorem}

\begin{remark}
    The absolute continuity assumption $\bbQ\gg\bbP$ ensures finiteness of $D_{\textnormal{KL}}\left({\cP_N}_{\#} \cN_\bbQ \,||\,{\cP_N}_{\#}\cN_{\bbP}\right)$ for any finite truncation parameter $N$, and incurs no loss of generality in this context: one can always replace $\bbQ$ by the mixture $\bbQ'=\frac{1}{2}(\bbP +\bbQ)$, and observe that $\bbP = \bbQ \iff \bbP = \bbQ'$.
\end{remark}

The left-hand side of \eqref{eq:operat} can be understood as a regularized/truncated likelihood ratio between the two Gaussian embeddings \(\cN_\bbP\) and \(\cN_\bbQ\). Indeed,
 given measures $\mu,\nu$ such that the likelihood ratio \( \frac{d\mu}{d\nu}\) exists \(\nu\)-almost everywhere, we can express the KL divergence as
$$
D_{\mathrm{KL}}(\mu\,||\,\nu) = \int_{\cX} \log  \frac{d\mu}{d\nu} \, d\mu,
$$
i.e. as the expected log-likelihood ratio under $\mu$. In other words, the left hand side of \eqref{eq:operat} quantifies, on average, how much more (or less) likely a sample drawn from \(\cN_\bbQ\) is under \(\cN_\bbQ\) than under \(\cN_\bbP\), when viewed through its projection on a subspace of dimension $N$. Such projected likelihood ratios have a long history, and indeed their use in functional data analysis, as well as their potential for nearly perfect testing, is already identified by Grenander \cite{grenander1981abstract}. In classical two-sample tests, the power of the test depends continuously on the \emph{magnitude} of the difference between distributions. But here, the truncation parameter $N$ is \emph{user-controlled}, and represents a regularisation. Thus, for sufficiently large sample sizes (regulating the empirical approximation of the embedding) one can hope to obtain very powerful tests by proper choice of $N$. Other regularized versions of KL divergence can be formulated, and an in-depth study of such consistent and powerful tests operationalising the results herein presented is carried out in \cite{santoroLRT25}. They show that it is possible to  specify a proper balancing of sample size and regularisation and to implement tests enjoying both highly powerful empirical performance and rigorous asymptotic theoretical guarantees.

 The proofs of our main results are given in a separate section -- in fact, we provide two alternative proofs. We comment here on the two key properties on which they rely: (i) the fact that the embedded covariances can be related by suitable multiplication operators acting on functions over a continuous domain; and (ii) that such multiplication operators cannot be compact unless they are trivial (uniformly zero).

\section{The Roles of Mean Embedding vs Covariance Embedding}
A natural question  is whether the singularity result can be separately attributed to the kernel mean or the kernel covariance component of the embedding. The high-level answer is that separation is elicited by the  covariance component of the embedding, and only that. To make things precise, we revisit the Feldman-H\'{a}jek conditions: by separating condition (ii) from conditions (i) and (iii) in that statement, one can disentangle the respective roles played by the difference in means (a “translation” effect) and by the difference in covariance structure (a “multiplicative” effect). In particular, writing $\overline{\S} =\S_{(\bbP+\bbQ)/2}= (\S_{\bbP} + \S_{\bbQ})/2$ for the pooled covariance embedding, one has \cite{rao1963discrimination,shepp1966gaussian}:
\begin{equation}\label{eq:disintangled_FH}
       \begin{gathered}
\cN(\m_{\bbP}, \S_{\bbP}) \perp  \cN(\m_{\bbQ}, \S_{\bbQ}) 
\\
\textit{if and only if}
\\
\Big\{\quad\cN(\m_{\bbP}, \overline{\S})\perp \cN(\m_{\bbQ}, \overline{\S})   \quad\mbox{or}\quad
\cN(\mathrm{0}, \S_{\bbP})\perp\cN(\mathrm{0}, \S_{\bbQ})  \quad \Big\}.
\end{gathered}
\end{equation}
Theorem~\ref{thm:main} shows that $\bbP\neq\bbQ$ if and only if $\cN(\mathrm{0}, \S_{\bbP})\perp \cN(\mathrm{0}, \S_{\bbQ})$, without any reference to the mean embedding. Therefore, we know that the covariance embedding alone
will always guarantee singularity under the alternative.

It remains to consider whether the mean component of the embedding alone could also guarantee singularity under the alternative -- in other words whether $\bbP\neq\bbQ$ if and only if $\cN(\m_{\bbP}, \overline{\S})\perp \cN(\m_{\bbQ}, \overline{\S})$. The answer is no, in a rather strong sense --- the following proposition establishes that  $\cN(\m_{\bbP}, \overline{\S})$ and $\cN(\m_{\bbQ}, \overline{\S})$ will \emph{never} be singular:
\begin{proposition}\label{prop:main:mean}
    Let $\bbP,\bbQ$ be non-atomic Borel probability measures on a locally compact uncountable Polish space $\cX$, and $k: \cX \times \cX\to \bbR$ be a bounded $C_0(\mathcal{X})$-universal  reproducing kernel.  
    The Gaussian measures $\cN(\m_{\bbP}, \overline{\S})$ and $\cN(\m_{\bbQ}, \overline{\S})$ on $\cH$  are always mutually equivalent,  regardless of whether $\bbP=\bbQ$ or $\bbP\neq\bbQ$. 
    \end{proposition}

In principle, thus, discrimination criteria solely targeting the mean embedding shift (whether or not the latter is whitened by the mixture's covariance embedding) provide a weaker measure of discrimination, as they are blind to the separation of measure induced by the covariance embedding shift.

\section{Discussion}

Our theoretical results in the last two sections suggest that leveraging the separation-of-measure phenomenon through the Gaussian embedding promises substantial gains in statistical efficiency, as compared to methods focusing purely on mean embeddings. Of course, any  procedure specifically targeting the induced Gaussian singularity ultimately remains a kernel embedding procedure. Consequently, the user will still benefit from the same computational simplifications afforded by the ``kernel trick" \cite{Scholkopf2002LearningWithKernels}, and be confronted with the same choices omnipresent in kernel methods \cite{muandet2017kernel}, including the choice of kernel family and the choice of concentration parameter (bandwidth) \cite{steinwart2001influence,chapelle2002choosing}. The principles guiding these choices are task-dependent, but remain the same in the context of exploiting separation of measure.

\begin{figure*}[htbp]
        \centering
        \includegraphics[width=.9\linewidth]{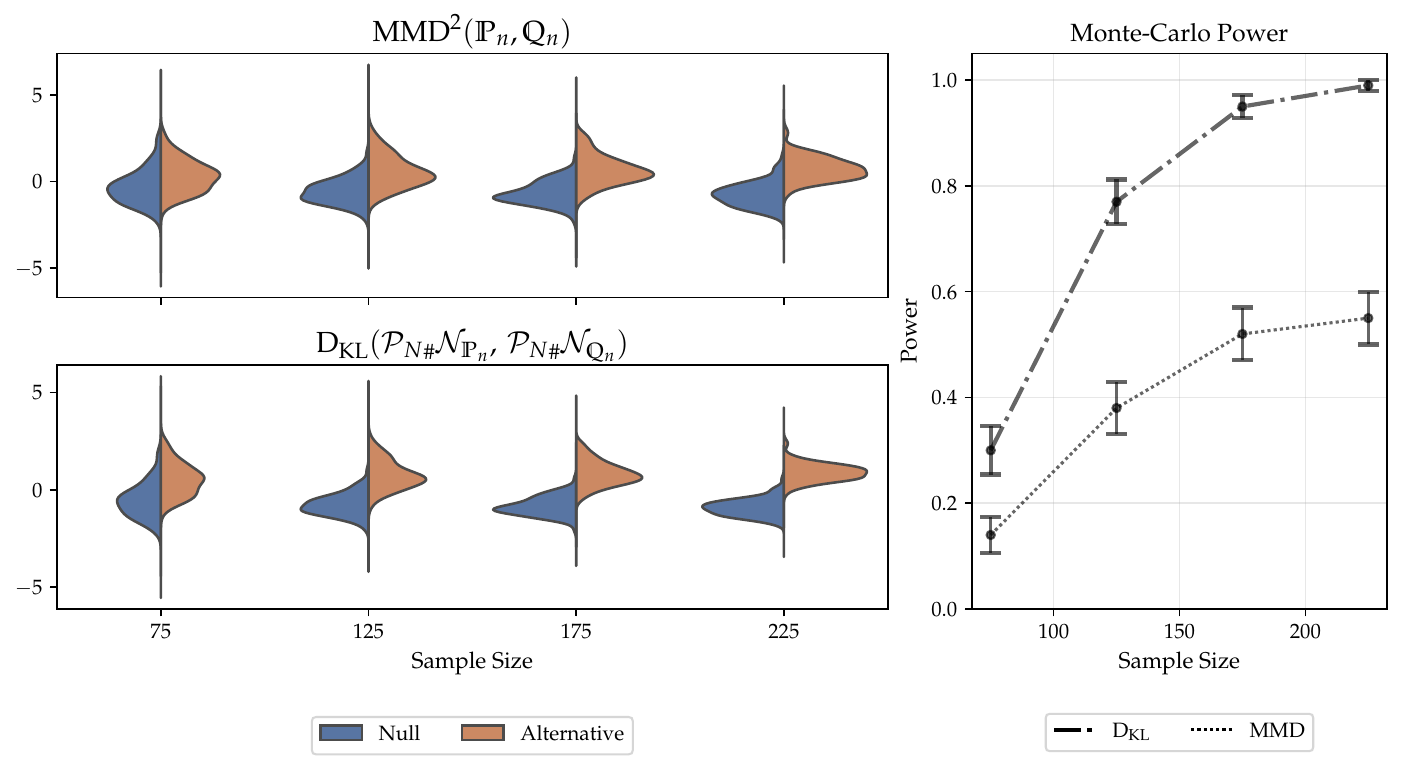}        
        \caption{Monte Carlo Illustration of the sampling behaviour of MMD and $\mathsf{D}_{\mathsf{KL}}$ under null  ($\alpha=1/2,\varepsilon=0$) and alternative ($\alpha=1/2,\varepsilon=1/4$) regimes, using a Laplacian kernel calibrated by median heuristic. \textbf{Left:} Smoothed Monte Carlo sampling distributions under $H_0$ and $H_1$, centered for ease of comparison. \textbf{Right:} Proportion of realisations under $H_1$ where each statistic exceeds the 95th percentile of the null sampling distribution based on $K = 100$ runs. }
        \label{fig:power}
\end{figure*}

For instance, in the context of testing, the kernel type should ideally reflect feature maps that are well-adapted to capture anticipated deviations from the null. That being said, Gaussian or Laplacian kernels are widely chosen in this context because they are known to be universal/characteristic on $\mathbb{R}^d$ \cite{sriperumbudur2011universality}, as required for consistent testing (unlike polynomial kernels, whose finite-dimensional RKHS precludes universality/characteristicness). Therefore, it is bandwidth selection that is the more dominant consideration, reflecting the scale at which differences are sought. Popular choices for empirically choosing a bandwidth include the median heuristic \cite{fukumizu2009kernel}, multiscale-kernels, and adaptive bandwidths \cite{gretton2012optimal,schrab2023mmd}. The same considerations apply when employing a separation-informed criterion such as \ref{eq:operat}, and underpin the choice of kernel type and bandwidth. But, within the same setting, and for the same type of kernel and method of bandwidth selection, we expect higher power when we make use of a separation-informed criterion such as \ref{eq:operat} (compared to a criterion targeting the mean embedding alone).

The following toy example provides a numerical sketch of this point in the standard setting $\mathcal{X}=\mathbb{R}^d$.   
Consider empirical distributions $
\bbP_n =\frac{1}{n}\sum_{i=1}^n \delta_{X_i},
$ and $
\bbQ_n =\frac{1}{n}\sum_{i=1}^n \delta_{Y_i}
$ sampled from centered $d$-dimensional Gaussian laws,   
\begin{equation*}
\begin{aligned}
X_1,\ldots,X_n &\stackrel{i.i.d.}{\sim} \bbP \equiv \mathcal{N}(0,C_{\alpha}) \\
Y_1,\ldots,Y_n &\stackrel{i.i.d.}{\sim} \bbQ \equiv \mathcal{N}(0,C_{\alpha+\varepsilon})
\end{aligned}
\end{equation*}
with AR(1) covariance structure $C_{\rho}=\{\rho^{|i-j|}\}_{i,j=1}^{d}$, for a correlation coefficient $\rho\in (0,1)$. In this context, we can contrast the null ($\varepsilon=0$) and alternative ($\varepsilon>0$) sampling distributions of:
\begin{itemize}

\item a  test statistic based purely on mean embedding, e.g. the MMD statistic $\|\m_{\bbP_n}-\m_{\bbQ_n}\|_{\cH}$, serving as a benchmark,

\item the test statistic $D_{\textnormal{KL}}\left({\cP_N}_{\#} \cN_{\bbQ_m} \,||\,{\cP_N}_{\#}\cN_{\bbP_n}\right)$ described in Equation  \ref{eq:operat}, targeting the separation phenomenon,
\end{itemize}
when both statistics use the same kernel type (taken as Laplace) and the same bandwidth selection procedure (taken as the median heuristic).  Figure~\ref{fig:power} probes these sampling distributions and illustrates the potential efficiency gains for ambient dimension $d=50$, sample sizes $n\in\{75,125,175,225\}$ and projection dimension $N=n$, based on one hundred Monte Carlo runs in each case. One can appreciate how the overlap between null  ($\alpha=1/2,\varepsilon=0$) and alternative ($\alpha=1/2,\varepsilon=1/4$) sampling distributions decreases more rapidly in $n$ for the separation-informed statistic. This is more finely highlighted by the Monte-Carlo power curves, showing the proportion of samples under the alternative where each statistic exceeds the 95th percentile of its corresponding null sampling distribution. Though power increases with sample size for both statistics, the separation-of-measure phenomenon manifests clearly in the form of a clear domination relation between the two curves.

\section{Proofs of Main Results}

Given a  Borel measure $\bbGamma$ on $\cX$, denote by $\bJ_{\bbGamma}$  the embedding:
\begin{equation}\label{eq:embedding_HtoL2}
\bJ_{\bbGamma} \::\: \cH \to L^{2}(\cX,\bbGamma), \qquad  f\mapsto f,
\end{equation}
where $L^{2}(\cX,\bbGamma)$ is the space of square-integrable functions on $\cX$ with respect to the measure $\bbGamma$.  Since $\mathcal X$ is a locally compact Polish space,  $C_{0}(\cX)$-universality of $k$ implies that it is also $L^{2}(\cX, \bbGamma)$-universal for any Borel measure $\bbGamma$ \cite[Theorem 4.1 and Corollary 4.4]{Carmeli2010}. In turn, $L^2(\cX,\bbGamma)$-universality of $k$ implies that the image of $\bJ_{\bbGamma}$ (i.e., the set of RKHS functions) is dense in $L^2(\cX, \bbGamma)$. When $\bbGamma$ is some measure dominating $\bbP$ and $\bbQ$ ($\bbP,\bbQ\ll \bbGamma$), for example $\bbGamma = \frac{1}{2}\bbP + \frac{1}{2}\bbQ$, we denote the respective densities as $d\bbP/d\bbGamma$ and $d\bbQ/d\bbGamma$. These densities allow to relate the inner products of the corresponding $L^2$ spaces via the relation
 $
\langle f,g\rangle_{L^2(\bbP)}=\int fg\,d\bbP=\int f\frac{d\bbP}{d\bbGamma}g\,d\bbGamma=\langle f, \bM_{{d\bbP}/{d\bbGamma}} g\rangle_{L^2(\bbGamma)} 
$, 
where the \emph{multiplication operator} $\bM_{{d\bbP}/{d\bbGamma}}(g)=\frac{d\bbP}{d\bbGamma}g$ encodes the change of measure from $\bbGamma$ to $\bbP$ as a pointwise  reweighting. The roles of the embedding operator $\bJ_{\bbGamma}$ and the multiplication operator $\bM_{d\bbP/d\bbGamma}$ can be understood as consequences of viewing the same function under different ambient Hilbert geometries. Elements of the RKHS $\mathcal H$ are functions on $\mathcal X$, but their norm encodes kernel-quantified smoothness rather than pointwise  magnitude. The embedding $\bJ_{\bbGamma}:\mathcal H\to L^2(\mathcal X,\bbGamma)$ simply regards an RKHS element as an ordinary square-integrable function, retaining pointwise values ($\bbGamma$-almost everywhere)  while discarding the RKHS geometry.   Similarly, when $\bbP\ll\bbGamma$,  embedding  $L^2(\mathcal X,\bbP)$ into $L^2(\mathcal X,\bbGamma)$ effectively rescales functions by $\sqrt{d\bbP/d\bbGamma}$, as reflected by the appearance of the multiplication operator $\bM_{d\bbP/d\bbGamma}$ in the change of inner product. The following lemma shows how $\bJ_\bbGamma$ and $\bM_{d\bbP/d\bbGamma}$ ``interlace" to yield a representation of $\S_{\bbP}$ that goes through the $L^2(\mathcal{X},\bbGamma)$ geometry, when $\bbP$ has essentially bounded density w.r.t. $\bbGamma$: 
\begin{lemma}\label{lem:Semb_is_multop}
    Let $\bbP,\bbGamma \in \cP(\mathcal{X})$ with $\bbP\ll \bbGamma$ and $\|d\bbP/d\bbGamma\|_{L^\infty(\cX,\bbGamma)}<\infty$. Then we can decompose
    $
    \S_{\bbP} = \bJ_{\bbGamma}^{\ast} \bM_{d\bbP/d\bbGamma} \bJ_{\bbGamma}.
   $
    where $\bJ_{\bbGamma}$ denotes the embedding operator  of $\cH$ into $L^2(\cX,\bbGamma)$ and  $\bM_{d\bbP/d\bbGamma}$ denotes the multiplication operator,
    \begin{equation}\label{eq:multop}
    \begin{split}
        \bM_{d\bbP/d\bbGamma}: L^{2}(\mathcal{X},\bbGamma) \to L^{2}(\mathcal{X},\bbGamma)
        \\ (\bM_{d\bbP/d\bbGamma}g)(x) = \frac{d\bbP}{d\bbGamma}(x)g(x).
    \end{split}
    \end{equation}
     
\end{lemma}

\begin{proof}[Proof of Lemma~\ref{lem:Semb_is_multop}]
    Notice that the range $\range(\bJ_{\bbGamma})$ of $\bJ_{\bbGamma}$ is dense if $k$ is universal.
    Note that for any $f \in L^{\infty}(\cX, \bbGamma)$, the multiplication operator $\bM_{f}$ is a bounded operator, and $\|\bM_{f}\| = \|f\|_{L^\infty(\mathcal{X},\bbGamma)}$.
    For $f,g\in \cH$, we have:
    \begin{align*}
        \langle f, \S_\bbP g\rangle_{\cH}
        &= \int_{\cX} \langle f, k_\bu\rangle \langle g, k_\bu\rangle \, ~d\bbP(\bu)
        \\&= \int_{\cX} f(\bu)g(\bu)\frac{d\bbP}{d\bbGamma}(\bu)  ~d\bbGamma(\bu) 
        \\&= \langle \bJ_{\bbGamma} f, \bM_{d\bbP/d\bbGamma}\bJ_{\bbGamma} g\rangle_{L^2(\cX,\bbGamma)}
        \\&= \langle f, \bJ_{\bbGamma}^{\ast}\bM_{d\bbP/d\bbGamma}\bJ_{\bbGamma} g\rangle_{\cH}
    \end{align*}
    where $\bM_{d\bbP/d\bbGamma}$ is the multiplication operator on $L^2(\cX,\bbGamma)$ corresponding to the density of $\bbP$ with respect to $\bbGamma$.
    \end{proof}

The second ingredient required for the proof of Theorem~\ref{thm:main} is the observation that the multiplication operator in the previous lemma cannot be compact. This stems from  a classical result (e.g. \cite[][Corollary 1.1]{Singh1979}) which we state and prove here in our precise context and notation, for the sake of completeness:
\begin{lemma}\label{lem:multisnotcompact}
    Let $\bbGamma$ be a non-atomic Borel probability measure on $\cX$, and consider the space $L^2(\cX,\bbGamma)$ of square-integrable functions with respect to $\bbGamma$. For $f\in L^2(\cX,\bbGamma)$, let $\bM_f$ be the multiplication operator defined in \eqref{eq:multop}. Then $\bM_f$ is compact if and only if $f=0$ $\bbGamma$-almost everywhere.
\end{lemma}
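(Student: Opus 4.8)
The statement to prove is Lemma~\ref{lem:multisnotcompact}: for a diffuse measure $\bbGamma$ on $\cX$ and $f \in L^2(\cX,\bbGamma)$, the multiplication operator $\bM_f$ on $L^2(\cX,\bbGamma)$ is compact if and only if $f = 0$ $\bbGamma$-almost everywhere. One direction is trivial: if $f = 0$ a.e.\ then $\bM_f = 0$, which is compact. The substantive direction is the contrapositive: if $f \neq 0$ on a set of positive $\bbGamma$-measure, then $\bM_f$ is not compact. The plan is to exhibit an orthonormal sequence $\{g_n\}$ in $L^2(\cX,\bbGamma)$ whose image $\{\bM_f g_n\}$ has no convergent subsequence, which contradicts compactness.

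\textbf{Key steps.} First I would reduce to a ``bounded below'' situation: since $f \neq 0$ on a positive-measure set, there is some $\varepsilon > 0$ such that the set $E := \{x \in \cX : |f(x)| \geq \varepsilon\}$ has $\bbGamma(E) > 0$. Second, I would use the fact that $\bbGamma$ restricted to $E$ is still diffuse (atomless) and has positive total mass, hence one can partition $E$ into countably many disjoint measurable pieces $E_1, E_2, \ldots$, each with $0 < \bbGamma(E_n) < \infty$ — this is where diffuseness is essential, since an atomless measure of positive mass admits such an infinite partition into positive-measure sets (repeatedly split a positive-measure set into two halves of positive measure using a Lyapunov/intermediate-value argument for atomless measures). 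Third, I would set $g_n := \bbGamma(E_n)^{-1/2}\,\mathbf{1}_{E_n}$, which form an orthonormal system in $L^2(\cX,\bbGamma)$ because the $E_n$ are disjoint. Fourth, I would estimate $\|\bM_f g_m - \bM_f g_n\|^2 = \|f g_m\|^2 + \|f g_n\|^2$ for $m \neq n$ (again by disjoint supports), and each term is $\int_{E_n} |f|^2 \,|g_n|^2 \, d\bbGamma \geq \varepsilon^2 \int_{E_n} |g_n|^2 \, d\bbGamma = \varepsilon^2$; hence $\|\bM_f g_m - \bM_f g_n\| \geq \varepsilon\sqrt{2}$ for all $m \neq n$. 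Thus $\{\bM_f g_n\}$ is a bounded sequence (the $g_n$ are unit vectors and $\bM_f$ is bounded, as $f \in L^\infty$ on $E$... more carefully, $\|\bM_f g_n\| \leq \|f \mathbf{1}_{E_n}\|_\infty \cdot 1$, but even without boundedness of $f$ globally we only need the lower bound) with pairwise distances bounded away from zero, so it has no Cauchy — hence no convergent — subsequence, contradicting compactness of $\bM_f$.

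\textbf{Main obstacle.} The only genuinely non-routine point is the second step: producing a countable measurable partition of $E$ into pieces of positive measure. This is precisely where the hypothesis that $\bbGamma$ is diffuse cannot be dropped — for a purely atomic measure no such partition of a single atom exists, and indeed $\bM_f$ can then be compact (e.g.\ diagonal operators with $\ell^2$-summable entries on a discrete space are Hilbert–Schmidt). The cleanest way to handle this is to invoke the standard fact that an atomless finite measure takes every value in $[0, \bbGamma(E)]$ on measurable subsets of $E$ (a consequence of, e.g., Sierpiński's theorem or Lyapunov's convexity theorem), and then inductively carve off pieces $E_n \subseteq E \setminus (E_1 \cup \cdots \cup E_{n-1})$ with $\bbGamma(E_n) = 2^{-n}\bbGamma(E)$, which is always possible since the residual set retains measure $2^{-(n-1)}\bbGamma(E) > 0$. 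Everything else is a short disjoint-support computation. I would present the argument in exactly this order: reduce to $E$, partition $E$, define the orthonormal sequence, estimate pairwise distances, conclude non-compactness.
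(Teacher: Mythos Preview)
Your proposal is correct and follows essentially the same route as the paper: reduce to a level set $E=\{|f|\ge \varepsilon\}$ of positive measure, partition it into countably many disjoint positive-measure pieces, take the normalized indicators as an orthonormal system, and show their images under $\bM_f$ have no convergent subsequence. You actually supply more detail than the paper on the partition step (invoking atomlessness explicitly via a Sierpi\'nski/Lyapunov-type argument), and you phrase the conclusion via a pairwise-distance lower bound rather than just $\|\bM_f g_n\|\ge\varepsilon$, but these are cosmetic differences.
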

\begin{proof}[Proof of Lemma~\ref{lem:multisnotcompact}]
    Suppose that $f \ne 0$ on a set of positive measure. Then there exists $\delta > 0$ such that the set
    \(
    A := \{x \in \cX : |f(x)| > \delta \}
    \)
    has positive $\bbGamma$-measure. Consider the orthonormal system $(e_n)$ in $L^2(\cX, \bbGamma)$ defined by the scaled indicator functions 
    \(
    e_n = {\mathbf{1}_{A_n}}/{\sqrt{\bbGamma(A_n)}} 
    \), 
    where $(A_n)$ is a sequence of disjoint measurable subsets of $A$ with positive and finite measure, e.g., obtained by partitioning $A$ into countably many pieces.
    Each $e_n$ is supported on $A_n \subset A$, so $|f| > \delta$ on $A_n$, and thus
    \begin{align*}
    \| \bM_f e_n \|_{L^2(\cal{X},\bbGamma)}^2 
    &= \int_{\cX} |f(x)|^2 |e_n(x)|^2 \, d\bbGamma(x)
    \\ &= \int_{A_n} |f(x)|^2 \cdot \frac{1}{\bbGamma(A_n)} \, d\bbGamma(x)
    \ge \delta^2.
    \end{align*}
    So $\| \bM_f e_n \| \ge \delta$ for all $n$.
    Moreover, since the $e_n$ are orthonormal, the sequence $(\bM_f e_n)$ has no convergent subsequence (in norm). Therefore, the image of the unit ball under $\bM_f$ is not relatively compact, and hence $\bM_f$ is not compact.
    
\end{proof}
We can now give the proofs of our main results.
\begin{proof}[Proof of Theorem~\ref{thm:main}]
 \textbf{Step 1.} Assuming equivalence of $\cN(\mathrm{0}, \S_{\bbP})$ and $\cN(\mathrm{0}, \S_{\bbQ})$, by the Feldman-H\'{a}jek Theorem \cite[see][Corollary 6.4.11]{bogachev1998gaussian}, there exists some Hilbert-Schmidt operator $\bH$ with eigenvalues greater than $-1$ such that:
\begin{equation}\label{eq:equiv}
    \S_{\bbQ} = \sqrt{\S_{\bbP}} (\bI + \bH) \sqrt{\S_{\bbP}}.
\end{equation} Let $\bbGamma = \tfrac{1}{2}(\bbP + \bbQ)$, and $p=d\bbP/d\bbGamma$, $q=d\bbQ/d\bbGamma$ be the corresponding densities. Observe that $\|p\|_{L^\infty(\mathcal{X},\bbGamma)}\leq \|p+q\|_{L^\infty(\mathcal{X},\bbGamma)}=\|d(2\bbGamma)/d\bbGamma\|_{L^\infty(\mathcal{X},\bbGamma)}=2$.
Therefore, by Lemma~\ref{lem:Semb_is_multop} we have that $\S_{\bbP} = (\bM_{\sqrt{p}}\bJ_{\bbGamma})^{\ast}(\bM_{\sqrt{p}}\bJ_{\bbGamma})$.
Recall that $\bU: \cH \to L^{2}(\cX, \bbGamma)$ is a \emph{partial isometry} if $\bU^{\ast}\bU$ (or equivalently, $\bU\bU^{\ast}$) is a projection operator. 
By polar decomposition \cite[see][Theorem 2.4.8]{Simon2015}, we can write $\bM_{\sqrt{p}}\bJ_{\bbGamma} = \bU\sqrt{\S_{\bbP}}$ for some partial isometry $\bU$ such that $\ker(\bU) = \ker(\S_{\bbP})$. By self-adjointness, $\sqrt{\S_{\bbP}} = \bU^{\ast}\bM_{\sqrt{p}}\bJ_{\bbGamma} = \bJ_{\bbGamma}^{\ast}\bM_{\sqrt{p}}\bU$. By \eqref{eq:equiv}:
\begin{align*}
    \bJ_{\bbGamma}^{\ast} \bM_{q} \bJ_{\bbGamma}
    &= \sqrt{\S_{\bbP}} (\bI + \bH) \sqrt{\S_{\bbP}}  
    \\& = \bJ_{\bbGamma}^{\ast}\bM_{\sqrt{p}}\bU\bU^{\ast}\bM_{\sqrt{p}}\bJ_{\bbGamma}  +  \bJ_{\bbGamma}^{\ast}\bM_{\sqrt{p}}\bU\bH\bU^{\ast}\bM_{\sqrt{p}}\bJ_{\bbGamma}
\end{align*}
which in turn readily implies that:
$$
\bJ_{\bbGamma}^{\ast} \left[\bM_{q} - \bM_{\sqrt{p}}\bU\bU^{\ast}\bM_{\sqrt{p}} - \bM_{\sqrt{p}}\bU\bH\bU^{\ast}\bM_{\sqrt{p}} \right] \bJ_{\bbGamma} = \bzero.
$$
Since $k$ is $C_{0}(\cX)$-universal, it is also $L^{2}(\cX, \bbGamma)$-universal (as $\cX$ is locally compact, second-countable and Hausdorff \cite[see][Theorem 4.1 and Corollary 4.4]{Carmeli2010} and the image of $\bJ_{\bbGamma}$ is dense. Therefore, $\bM_{q} = \bM_{\sqrt{p}}\bU\bU^{\ast}\bM_{\sqrt{p}} + \bM_{\sqrt{p}}\bU\bH\bU^{\ast}\bM_{\sqrt{p}}$ which implies that $\mathrm{supp~} q \subset \mathrm{supp~} p$.
By symmetry, it must be $\mathrm{supp~} p = \mathrm{supp~} q \,(=: S$, say).  Define 
\begin{equation}\label{eqn:interpret}
        (q/p)(x) := \begin{cases}
            q(x)/p(x) &\mbox{for } x\in S\\
            0 &\mbox{otherwise.}
        \end{cases}
\end{equation}
\textbf{Step 2.} Consider the subspace $M$ of functions $f$ such that $\mathrm{supp\,} f \subset S$.
Notice that for every $f \in M$, there exists a sequence $\{f_{j}\}_{j=1}^{\infty}$ such that $\sqrt{p}f_{j} \to f$ as $j \to \infty$. Then for $f \in M$,
    \begin{align*}
        \langle &f, \bM_{q/p}f \rangle =
        \\&= \lim_{j \to \infty} ~\langle \sqrt{p}f_{j}, \bM_{q/p} \sqrt{p}f_{j} \rangle 
        \\&= \lim_{j \to \infty} ~\langle f_{j}, \bM_{q}f_{j} \rangle
        = \lim_{j \to \infty} ~\langle f_{j}, [\bM_{\sqrt{p}}\bU\bU^{\ast}\bM_{\sqrt{p}} + \bM_{\sqrt{p}}\bU\bH\bU^{\ast}\bM_{\sqrt{p}}]f_{j} \rangle
        \\& = \lim_{j \to \infty} ~\langle \sqrt{p}f_{j}, [\bU\bU^{\ast} + \bU\bH\bU^{\ast}]\sqrt{p}f_{j} \rangle 
        \\&= \langle f, [\bU\bU^{\ast} + \bU\bH\bU^{\ast}]f \rangle.
    \end{align*}
Using the fact that $L^{2}(\cX, \bbGamma) = M \oplus M^{\perp}$, it follows that $\bM_{q/p} = \Pi_{M}^{\ast}[\bU\bU^{\ast} + \bU\bH\bU^{\ast}]\Pi_{M},$ where $\Pi_{M}$ is the projection in $L^{2}(\cX, \bbGamma)$ to $M$. The right-hand side is bounded since $\|\Pi_{M}^{\ast}[\bU\bU^{\ast} + \bU\bH\bU^{\ast}]\Pi_{M}\|_{\text{op}(\cH)} \leq 1 + \|\bH\|_{\text{op}(\cH)} \leq 1 + \|\bH\|_{\text{HS}(\cH)} < \infty$ implying $\|q/p\|_{L^{\infty}(\cX, \bbGamma)} = \|\bM_{q/p}\|_{\text{op}(L^2(\cX,\bbGamma))} < \infty$.\\ 

\noindent
\textbf{Step 3.} Notice that $\Pi_{M}^{\ast}\bU\bU^{\ast}\Pi_{M}$ is a projection and $\Pi_{M}^{\ast}\bU\bH\bU^{\ast}\Pi_{M}$ is compact because $\bH$ is compact. 
Hence:
\begin{align*}
     &\bM_{(q/p) - 1} = \Pi_{M}^{\ast}\bU\bH\bU^{\ast}\Pi_{M}  
      \quad\textnormal{on}\quad
     \mathcal{R}(\Pi_{M}^{\ast}\bU\bU^{\ast}\Pi_{M}) \quad    \textnormal{and}  \\
    &\bM_{q/p} = \Pi_{M}^{\ast}\bU\bH\bU^{\ast}\Pi_{M}
      \quad\textnormal{on}\quad
     \mathcal{R}(\bI - \Pi_{M}^{\ast}\bU\bU^{\ast}\Pi_{M}).
\end{align*}
Either $\dim \mathcal{R}(\Pi_{M}^{\ast}\bU\bU^{\ast}\Pi_{M}) = \infty$ or $\dim \mathcal{R}(\bI - \Pi_{M}^{\ast}\bU\bU^{\ast}\Pi_{M}) = \infty$. As a consequence, at least one of $\bM_{(q/p) - 1}$ or $\bM_{q/p}$ has to be zero,  since, by Lemma \ref{lem:multisnotcompact}, there are no compact nonzero multiplication operators on $L^{2}(\cX, \bbGamma)$ (or its infinite-dimensional subspaces such as $M$) when $\bbGamma$ is non-atomic. This implies that either $p = q$ or $q = 0$ with the latter being impossible since $q$ is a probability measure. It follows that $\bbP = \bbQ$. The converse is immediate.
\end{proof}

\noindent A less abstract proof based on a coordinate-wise argument is possible if we assume, in addition, that $k$ is continuous:

\begin{proof}[Alternative coordinate-wise proof for continuous $k$] One direction remains immediate. For the non-trivial direction, let $\bbGamma = \tfrac{1}{2}(\bbP + \bbQ)$ and $\bS_{\bbGamma} = \int k_{x} \otimes k_{x} \, d\bbGamma(x)$, noting that $\bS_{\bbGamma} = \bJ_{\bbGamma}^{\ast}\bJ_{\bbGamma}^{\phantom{\ast}}$. 
    We first prove $\cN(0,\S_\bbQ)\sim\cN(0,\S_\bbP) \Rightarrow \bbP=\bbQ$ assuming $\mathrm{supp}(\bbGamma)=\mathcal{X}$, and then show how this reduction establishes the general case.
    
\smallskip

    \noindent\textbf{Full support}. Assume $\supp\{\bbGamma\}=\mathcal{X}$. Since $k$ is $C_0(\mathcal{X})$-universal, any $f\in\mathcal{H}$ is continuous on $\mathcal{X}\equiv\supp\{\bbGamma\}$. Hence
\(
f=0 \iff 
0=\int_{\mathcal{X}} |f(x)|^2\, d\bbGamma(x)
\equiv \langle f,\S_\bbGamma f\rangle_{\mathcal H}
\iff f\in \ker(\S_\bbGamma^{1/2}).
\)
We conclude that $\ker(\S_\bbGamma^{1/2})=\{0\}$.
    Now consider the operator $\mathbf{H} :=\S_{\bbGamma}^{-1/2}(\S_{\bbGamma} - \S_{\bbQ})\S_{\bbGamma}^{-1/2}$, which is well defined on $\mathcal{R}(\S_\bbGamma^{1/2})$. 
    Note that $\cN(0,\S_\bbQ)\sim\cN(0,\S_\bbP) \Rightarrow \cN(0,\S_\bbQ)\sim\cN(0,\S_\bbGamma)$;
    hence, by the Feldman-H\'{a}jek Theorem 
    \cite[see][Corollary 6.4.11]{bogachev1998gaussian}, the operator 
     $\mathbf{H}$  extends to a bounded (indeed Hilbert-Schmidt) operator on $\overline{\mathcal{R}(\S_\bbGamma^{1/2})}=\ker^\perp(\S_\bbGamma^{1/2})=\mathcal{H}$.
     We will now show this implies $\bbP=\bbQ$.

    \smallskip

    Let $\{(\gamma_{k}, \phi_{k})\}_{k\geq 1}$ be eigenvalues and eigenfunctions of  $\S_{\bbGamma}$, and notice that $\{\phi_{k}\}_{k \geq 1}$ forms a complete orthonormal basis of $\cH$. Furthermore, since $C_{0}(\cX)$-universality implies $L^{2}(\cX, \bbGamma)$-universality \cite[see][Theorem 4.1]{Carmeli2010}) we notice that, defining $\{f_{j}\}_{j\geq 1} \subset L^2(\cX, \bbGamma)$ as $f_{j} := \frac{1}{\sqrt{\gamma_{j}}}\bJ_{\bbGamma}\phi_{j}$, the system
    $\{f_{j}\}_{j\geq 1}$ forms an orthonormal basis of $L^2(\cX, \bbGamma)$, 
    \begin{equation}
    \label{eq:fjisbasis}
    \begin{split}
        \langle f_{j}, f_{k} \rangle_{L^{2}(\cX, \bbGamma)} 
        &= \frac{1}{\sqrt{\gamma_{j}\gamma_{k}}} \langle \bJ_{\bbGamma}^{\ast}\bJ_{\bbGamma}^{\phantom{\ast}}\phi_{j}, \phi_{k} \rangle_{\cH} 
        \\&= \frac{1}{\sqrt{\gamma_{j}\gamma_{k}}} \langle \bS_{\bbGamma} \phi_{j}, \phi_{k} \rangle_{\cH} 
       \\& = \frac{1}{\sqrt{\gamma_{j}\gamma_{k}}} \langle \bS_{\bbGamma}^{1/2} \phi_{j}, \bS_{\bbGamma}^{1/2} \phi_{k} \rangle_{\cH}
        \\&= \langle \phi_{j}, \phi_{k} \rangle_{\cH} 
    = \delta_{jk}.
    \end{split}
    \end{equation}

    Observe that when $k:\mathcal{X}\times\mathcal{X}\to\mathbb{R}$ is continuous, its RKHS $\cH$ is separable, by separability of $\cX$ itself. 
      Thus, 
    \begin{align*}
        \|\mathbf{H}\|_{\mathrm{HS}(\cH)}^2 &=
        \sum_{j,k\geq 1} \frac{1}{{\gamma_{j}\gamma_{k}}}\left\langle (\S_{\bbGamma} - \S_{\bbQ})\phi_{j}, \phi_{k}\right\rangle _{\cH}^2 
        \\&= \sum_{j,k\geq 1} \frac{1}{{\gamma_{j}\gamma_{k}}}\left\langle (\bI - \bM_{g}) \bJ_{\bbGamma}\phi_{j},\bJ_{\bbGamma}\phi_{k}\right\rangle _{L^2(\cX, \bbGamma)}^2 
        \\&= \sum_{j,k\geq 1} \left\langle (\bI - \bM_{g}) f_{j},f_{k}\right\rangle _{L^2(\cX, \bbGamma)}^2 
        \\&= \sum_{j,k\geq 1} \left\langle \bM_{g-1} f_{j},f_{k}\right\rangle _{L^2(\cX, \bbGamma)}^2 
        \\&= \|\bM_{g-1} \|_{\mathrm{HS}(L^2(\cX, \bbGamma))}^2.
    \end{align*}
    where $g = d\bbQ/d\bbGamma$ and we have made use of Lemma \ref{lem:Semb_is_multop} to pass to the second line. In summary, $\cN(0,\S_\bbQ)\sim\cN(0,\S_\bbP)$ implies that $\bM_{g-1}$ is Hilbert-Schmidt, and hence compact. Hence,  by Lemma \ref{lem:multisnotcompact} we have that $\bbP = \bbQ$.

    \medskip

    \noindent\textbf{General case.}
    Suppose now that in fact $\mathrm{supp}\{\bbGamma\}=\mathcal{X}_0\subset\mathcal{X}$. Note that $\mathcal{X}_0$ is also a locally compact Polish space as it is closed, and restricting $k$ to $\mathcal{X}_0$ preserves universality. Call the restricted kernel $k_0$, and let the corresponding RKHS be $\mathcal{H}_0$. 
    Let $\bbP_0,\bbQ_0,\bbGamma_0$ be the non-atomic Borel probability measures obtained by restricting $\bbP,\bbQ,\bbGamma$ to $\mathcal{X}_0$, respectively. Finally, let $\S_{\bbP_0},\S_{\bbQ_0}, \S_{\bbGamma_0}$ be the restricted covariance embeddings. Clearly, $\bbP=\bbQ\iff\bbP_0=\bbQ_0$. So, given the first part of our proof, it suffices to establish that $\mathcal{N}(0,\S_\bbGamma)\sim \mathcal{N}(0,\S_\bbQ)\implies \mathcal{N}(0,\S_{\bbGamma_0})\sim \mathcal{N}(0,\S_{\bbQ_0})$. To this aim, recall that from the RKHS restriction theorem (\cite[Corollary 5.8]{paulsen2016introduction} or \cite[Remark 2.2 in the Supplemental Material of ][]{waghmarepanaretos2022}), we know that $\mathcal{H}_0$ can be identified with the orthocomplement of $\mathcal{H}_1:=\{f\in\mathcal{H}: f|_{\mathcal{X}_0}=0\}\subseteq\mathcal{H}$.
     Therefore, that $f\in\mathcal{H}_1\Rightarrow f\in \mathrm{ker}(\S_\bbGamma)=\mathrm{ker}(\S_\bbGamma^{1/2})$, so that $\mathcal{H}_0\supset \overline{\mathcal{R}(\S_\bbGamma^{1/2})}=\mathrm{supp}\{\mathcal{N}(0,\S_\bbGamma)\}= \mathrm{supp}\{\mathcal{N}(0,\S_\bbQ)\}$. So  $\mathcal{N}(0,\S_\bbGamma)$ and $\mathcal{N}(0,\S_\bbQ)$ coincide with their restrictions on $\mathcal{H}_0$, which in turn are identified with $\mathcal{N}(0,\S_{\bbGamma_0})$ and $\mathcal{N}(0,\S_{\bbQ_0})$.\end{proof}

\begin{proof}[Proof of Corollary~\ref{cor:main}]
    The proof follows from the combination of Theorem~\ref{thm:main} and the Feldman-H\'{a}jek dichotomy. Indeed, one direction is trivial. For the other, observe that  the equivalence of the Gaussian measures $\cN(\m_{\bbP}, \S_{\bbP})$ and $\cN(\m_{\bbQ}, \S_{\bbQ})$ implies, by \eqref{eq:disintangled_FH}, the equivalence of the Gaussian measures $\cN(\mathrm{0}, \S_{\bbP})$ and $\cN(\mathrm{0}, \S_{\bbQ})$.
    However, by Theorem~\ref{thm:main}, this can only occur if $\bbP=\bbQ$.
\end{proof}

\begin{proof}[Proof of Proposition~\ref{prop:main:mean}]
We will first show that equivalence holds if and only if $\frac{d\bbP}{d\bbGamma}  - \frac{d\bbQ}{d\bbGamma}\in L^2(\cX,\bbGamma)$ for $\bbGamma = \frac{1}{2}(\bbP + \bbQ)$. To do so, we adapt the argument in  \cite[][]{eric2007testing} to (uncentered) second-order embeddings $\S_{\bbP},\S_{\bbQ}$. Noting that $\S_{\bbGamma}=\frac{1}{2}(\S_{\bbP} + \S_{\bbQ})$, we need to verify
    $
    \m_{\bbP} - \m_{\bbQ} \in \range\left((\S_{\bbP} + \S_{\bbQ})^{1/2}\right)
    $. 
    Equivalently,
    that following two conditions both hold true: 
    (1) 
    $
     \m_{\bbP} - \m_{\bbQ}\perp\ker(\S_{\bbP}+\S_{\bbQ}),$ and (2)
    $
     \sum_{j\geq 1} \gamma_j^{-1} \langle \phi_j, \m_{\bbP} - \m_{\bbQ} \rangle_{\cH}^2 < \infty,
    $
    where $(\gamma_k, \phi_k)_{k\geq 1}$ denotes the eigensystem of $\S_{\bbGamma}$. 

    The first condition is easily shown: taking $g\in \ker(\S_{\bbP}+\S_{\bbQ})$, and since $\bbGamma = \frac{1}{2}(\bbP + \bbQ)$ dominates both $\bbP$ and $\bbQ$,
    \begin{align*}
        \langle g, \m_{\bbP} - \m_{\bbQ} \rangle_{\cH} 
         &= \int_{\cX} g(x)\,d\bbP(x)-\int_{\cX} g(x)\,d\bbQ(x)
         \\&= \int_{\cX} g(x)\left( \frac{d \bbP}{d\bbGamma}(x) - \frac{d \bbQ}{d\bbGamma}(x)\right)d\bbGamma(x)
         \\&=\left\langle \frac{d \bbP}{d\bbGamma} - \frac{d \bbQ}{d\bbGamma}, \bJ_{\bbGamma} g \right\rangle _{L^2(\cX,\bbGamma)}.
    \end{align*}
         
    But if $g\in \ker(\S_{\bbP}+\S_{\bbQ})$, we have, 
    \begin{align*}
    \|\bJ_{\bbGamma}g\|^2_{L^2(\cX,\bbGamma)}
        &=\int_{\cX} g(x)^2 \, d\bbGamma(x) 
        \\&=  \frac{1}{2}\int_{\cX} g(x)^2 \, d\bbP(x) + \frac{1}{2}\int_{\cX} g(x)^2 \, d\bbQ(x) 
        \\&=  \langle g, (\S_{\bbP} + \S_{\bbQ}) g\rangle_{\cH} =0.
    \end{align*}
    Thus $\langle g, \m_{\bbP} - \m_{\bbQ} \rangle_{\cH} = 0$, as (1) requires. As for the second condition, a similar calculation as before yields
    \begin{align*}
        \sum_{j\geq 1} &\gamma_j^{-1} \langle \phi_j, \m_{\bbP} - \m_{\bbQ} \rangle_{\cH}^2 = \\
            &   =  \sum_{j\geq 1} \gamma_j^{-1}\left\langle \frac{d \bbP}{d\bbGamma} - \frac{d \bbQ}{d\bbGamma}, \bJ_{\bbGamma}\phi_j \right\rangle _{L^2(\cX,\bbGamma)}^2
            \\&=  \sum_{j\geq 1} \left\langle \frac{d \bbP}{d\bbGamma} - \frac{d \bbQ}{d\bbGamma}, \underset{f_j}{\underbrace{\gamma_j^{-1/2}\bJ_{\bbGamma}\phi_j }}\right\rangle _{L^2(\cX,\bbGamma)}^2.  
    \end{align*}
    Finally, observe that $f_j = \gamma_j^{-1/2}\bJ_{\bbGamma}\phi_j$ is an orthonormal basis of $L^2(\cX,\bbGamma)$, by the same reasoning as in \eqref{eq:fjisbasis}. 
    In summary, we have established that equivalence holds if and only if $\frac{d\bbP}{d\bbGamma}  - \frac{d\bbQ}{d\bbGamma}\in L^2(\cX,\bbGamma)$ for $\bbGamma = \frac{1}{2}(\bbP + \bbQ)$. But now observe that $\bbGamma$-almost everywhere on $\mathcal{X}$, $\left|\frac{d\bbP}{d\bbGamma}-\frac{d\bbQ}{d\bbGamma}\right|\leq \left|\frac{d\bbP}{d\bbGamma}\right|+\left|\frac{d\bbQ}{d\bbGamma}\right|=\frac{d\bbP}{d\bbGamma}+\frac{d\bbQ}{d\bbGamma}=\frac{d(\bbP+\bbQ)}{d\bbGamma}=\frac{d(2\bbGamma)}{d\bbGamma}=2,$
so squaring and integrating with respect to $\bbGamma$ yields $\left\|\frac{d\bbP}{d\bbGamma}-\frac{d\bbQ}{d\bbGamma}\right\|^2_{L^2(\mathcal{X},\bbGamma)}\leq 4$, regardless of whether $\bbP=\bbQ$ or   $\bbP\neq\bbQ$.
\end{proof}

For the proof of Theorem~\ref{thm:operat}, we recall some additional background related to Gaussian measures. Given  $\cN(\m_1,\S_1)\sim\cN(\m_2,\S_2)$, the \textit{Kullback-Leibler divergence} (or relative entropy) takes the explicit form:
\begin{equation}\label{eq:kl}
\begin{split}
 D_{\textnormal{KL}}&(\cN(\m_1,\S_1)\,||\,\cN(\m_2,\S_2)) = 
 \frac{1}{2}\|{\S_2^{-1/2}}(\m_1 - \m_2) \|^2 \\& \quad-  \frac{1}{2}\log\dettwo\left( \Id - \S_2^{-1/2}(\S_1 - \S_2)\S_2^{-1/2} \right).
\end{split}
\end{equation}
Here, $\dettwo$ denotes the Fredholm-Carleman determinant \cite{simon1977notes,fredholm1903classe} of a symmetric $\mathbf{H}$ with eigenvalues $\left\{\gamma_j\right\}_{j=1}^{\infty}$,
$$
\operatorname{det}_2(\mathbf{I}+\mathbf{H})=\prod_{j=1}^{\infty}\left(1+\gamma_j\right) e^{-\gamma_j}.
$$
It can be shown that the infinite product converges when $\sum_{j=1}^{\infty} \gamma_j^2<\infty$ and thus that the Carleman-Fredholm determinant is well-defined for all Hilbert-Schmidt operators with eigenvalues larger than $-1$. It is also known that the map $ \mathbf{H} \mapsto \operatorname{det}_2(\mathbf{I}+\mathbf{H})$ is strictly log-concave, continuous everywhere in $\|\cdot\|_{\textnormal{HS}(\cH)}$ norm and Gateaux differentiable on the subset of Hilbert-Schmidt operators whose spectrum excludes $-1$.

 \begin{proof}[Proof of Theorem~\ref{thm:operat}]
 Assume $\bbP\neq \bbQ$. Let $\{e_i\}_{i\geq 1}$ be a CONS comprised of eigenfunctions for $\S_\bbP$, with corresponding eigenvalue sequence $\{\lambda_i\}_{i\geq1}$. 
Define the sequence of projections
$
\cP_N = \sum_{i=1}^N e_i\otimes e_i
$
which converges strongly to the identity. 
Then, by \eqref{eq:kl}:
\begin{align*}
D_{\textnormal{KL}}\left({\cP_N}_{\#} \cN_\bbQ \,||\,{\cP_N}_{\#}\cN_{\bbP}\right) 
=& \: \frac{1}{2}\sum_{i= 1}^N \lambda^{-1}_i\langle  \m_\bbP - \m_\bbQ,e_i\rangle_{\cH}^2 \\ & \quad +  \frac{1}{2}\sum_{i=1}^N\left(\Delta_{i} - \log(1+\Delta_{i}) \right)
\end{align*}
where
$\Delta_i = \lambda_i^{-1}\langle(\S_{\bbQ} - \S_\bbP)e_i,e_i\rangle_{\cH}.
$ The first term is finite for any $N>0$, so we move our attention to the second. 
By \cite[][Proposition 3]{bach2022information}, we have that $\S_\bbP\prec C \S_\bbQ$ for some $C>0$. Hence:
\begin{align*}
1 + \Delta_{i} 
& = 1 + \frac{1}{\lambda_i}\langle(\S_{\bbQ} - \S_\bbP)e_i,e_i\rangle_{\cH} 
\\&= \frac{1}{\lambda_i}\langle\S_{\bbQ} e_i,e_i\rangle_{\cH} 
\\&    >  \frac{1}{C\lambda_i}\langle\S_{\bbP} e_i,e_i\rangle_{\cH} 
> 0
\end{align*}
This ensures the boundedness of each element in the sequence, and hence the finiteness of the projected relative entropy.
Then, similarly to the coercivity argument in \cite[][Proof of Lemma 3]{waghmare2023functionalgraphical}, we have that:
\begin{align*}
\sum_{i=1}^N
\left(\Delta_i - \log(1+\Delta_i) \right)
& = \log\left( \prod_{i=1}^N e^{\Delta_i}(1+ \Delta_i)^{-1}\right)
\\& = \sum_{i=1}^N\log\left(1+  \frac{1}{(1+\Delta_i)}\sum_{k=2}^{\infty} \frac{\Delta_i^k}{k!}\right)
\\& \geq  \log (1 + \frac{1}{3}\sum_{i=1}^N\Delta_{i}^2).
\end{align*}
However, note that necessarily the sum $\sum_{i=1}^N\Delta_{i}^2$ diverges as $N\to \infty$, since otherwise it would imply that the operator $\S_\bbP^{-1/2}(\S_\bbQ-\S_\bbP)\S_\bbP^{-1/2}$ is Hilbert-Schmidt; but we assumed that $\bbP\neq \bbQ$, and this directly contradicts Theorem~\ref{thm:main}.
 \end{proof}




\bibliographystyle{plain}
\bibliography{bib}

@article{kolmogorov1933,
  title={Sulla determinazione empirica di una legge di distribuzione},
  author={A, Kolmogorov},
  journal = {Giornale dell'Istituto Italiano degli Attuari},
  volume={4},
  pages={89--91},
  year={1933}
}

@article{smirnov1948table,
  title={Table for estimating the goodness of fit of empirical distributions},
  author={Smirnov, Nickolay},
  journal={The annals of mathematical statistics},
  volume={19},
  number={2},
  pages={279--281},
  year={1948},
  publisher={Institute of Mathematical Statistics}
}

@article{pearson1900,
  author = {Karl Pearson},
  title = {On the Criterion That a Given System of Deviations from the Probable in the Case of a Correlated System of Variables is Such That It Can Be Reasonably Supposed to Have Arisen from Random Sampling},
  journal = {Philosophical Magazine},
  year = {1900},
  volume = {50},
  pages = {157–175}
}

@article{student1908,
  author = {William Sealy Gosset},
  title = {The Probable Error of a Mean},
  journal = {Biometrika},
  year = {1908},
  volume = {6},
  pages = {1–25},
  note = {Published under the pseudonym "Student"}
}

@article{wilcoxon1945,
  author = {Frank Wilcoxon},
  title = {Individual Comparisons by Ranking Methods},
  journal = {Biometrics Bulletin},
  year = {1945},
  volume = {1},
  number = {6},
  pages = {80–83}
}

@article{mannwhitney1947,
  author = {H.B. Mann and D.R. Whitney},
  title = {On a Test of Whether One of Two Random Variables is Stochastically Larger than the Other},
  journal = {The Annals of Mathematical Statistics},
  year = {1947},
  volume = {18},
  pages = {50–60}
}

@book{bogachev1998gaussian,
  title={Gaussian measures},
  author={Bogachev, Vladimir Igorevich},
  number={62},
  year={1998},
  publisher={American Mathematical Soc.}
}

@article{gretton2007kernel,
  title={A kernel statistical test of independence},
  author={Gretton, Arthur and Fukumizu, Kenji and Teo, Choon and Song, Le and Sch{\"o}lkopf, Bernhard and Smola, Alex},
  journal={Advances in Neural Information Processing Systems},
  volume={20},
  year={2007}
}

@article{shepp1966gaussian,
  title={Gaussian measures in function space},
  author={Shepp, Lawrence},
  journal={Pacific Journal of Mathematics},
  volume={17},
  number={1},
  pages={167--173},
  year={1966},
  publisher={Mathematical Sciences Publishers}
}

@article{rao1963discrimination,
  title={Discrimination of Gaussian processes},
  author={Rao, C Radhakrishna and Varadarajan, VS},
  journal={Sankhy{\=a}: The Indian Journal of Statistics, Series A},
  pages={303--330},
  year={1963},
  publisher={JSTOR}
}

@article{eric2007testing,
  title={Testing for homogeneity with kernel Fisher discriminant analysis},
  author={Eric, Moulines and Bach, Francis and Harchaoui, Za{\"\i}d},
  journal={Advances in Neural Information Processing Systems},
  volume={20},
  year={2007}
}

@article{gretton2012kernel,
  author    = {Arthur Gretton and Karsten M. Borgwardt and Malte J. Rasch and Bernhard Schölkopf and Alexander Smola},
  title     = {A Kernel Two-Sample Test},
  journal   = {Journal of Machine Learning Research},
  volume    = {13},
  pages     = {723--773},
  year      = {2012}
}

@inproceedings{gretton2012optimal,
  author    = {Arthur Gretton and Kenji Fukumizu and Choon H. Teo and Le Song and Bernhard Schölkopf and Alex Smola},
  title     = {Optimal Kernel Choice for Large-Scale Two-Sample Tests},
  booktitle = {Advances in Neural Information Processing Systems},
  volume    = {25},
  year      = {2012}
}

@article{chapelle2002choosing,
  title={Choosing multiple parameters for support vector machines},
  author={Chapelle, Olivier and Vapnik, Vladimir and Bousquet, Olivier and Mukherjee, Sayan},
  journal={Machine learning},
  volume={46},
  number={1},
  pages={131--159},
  year={2002},
  publisher={Springer}
}

@article{schrab2023mmd,
  title={MMD aggregated two-sample test},
  author={Schrab, Antonin and Kim, Ilmun and Albert, M{\'e}lisande and Laurent, B{\'e}atrice and Guedj, Benjamin and Gretton, Arthur},
  journal={Journal of Machine Learning Research},
  volume={24},
  number={194},
  pages={1--81},
  year={2023}
}

@article{fukumizu2009kernel,
  title={Kernel choice and classifiability for RKHS embeddings of probability distributions},
  author={Fukumizu, Kenji and Gretton, Arthur and Lanckriet, Gert and Sch{\"o}lkopf, Bernhard and Sriperumbudur, Bharath K},
  journal={Advances in Neural Information Processing Systems},
  volume={22},
  year={2009}
}

@book {Simon2015,
    AUTHOR = {Simon, Barry},
     TITLE = {Operator theory},
    SERIES = {A Comprehensive Course in Analysis},
    VOLUME = {Part 4},
 PUBLISHER = {American Mathematical Society, Providence, RI},
      YEAR = {2015},
     PAGES = {xviii+749},
      ISBN = {978-1-4704-1103-9},
   MRCLASS = {47-01 (34-01 35-01 42B35 42B37 43-01 46-01 81-01)},
  MRNUMBER = {3364494},
MRREVIEWER = {Fritz\ Gesztesy},
       DOI = {10.1090/simon/004},
       URL = {https://doi.org/10.1090/simon/004},
}

@article{sriperumbudur2011universality,
  author    = {Bharath K. Sriperumbudur and Kenji Fukumizu and Arthur Gretton and Gert R. G. Lanckriet and Bernhard Schölkopf},
  title     = {On the Empirical Estimation of Integral Probability Metrics},
  journal   = {Electronic Journal of Statistics},
  volume    = {6},
  pages     = {1550--1599},
  year      = {2011}
}

@article{Carmeli2010,
  author  = {Carmeli, C. and De Vito, E. and Toigo, A. and Umanit\`{a}, V.},
  title   = {Vector Valued Reproducing Kernel Hilbert Spaces and Universality},
  journal = {Analysis and Applications},
  volume  = {8},
  number  = {1},
  pages   = {19--61},
  year    = {2010},
  doi     = {10.1142/S0219530510001503},
  url     = {https://doi.org/10.1142/S0219530510001503}
}

@article {Singh1979,
    AUTHOR = {Singh, R. K. and Kumar, Ashok},
     TITLE = {Compact composition operators},
   JOURNAL = {J. Austral. Math. Soc. Ser. A},
  FJOURNAL = {Australian Mathematical Society. Journal. Series A. Pure
              Mathematics and Statistics},
    VOLUME = {28},
      YEAR = {1979},
    NUMBER = {3},
     PAGES = {309--314},
      ISSN = {0263-6115},
   MRCLASS = {47B38 (47B05)},
  MRNUMBER = {557280},
MRREVIEWER = {N.\ Ti\c ta},
}

@article{waghmarepanaretos2022,
 ISSN = {00905364, 21688966},
 URL = {https://www.jstor.org/stable/27324196},
 abstract = {We consider the problem of positive-semidefinite continuation: extending a partially specified covariance kernel from a subdomain Ω of a rectangular domain I × I to a covariance kernel on the entire domain I × I. For a broad class of domains Ω called serrated domains, we are able to present a complete theory. Namely, we demonstrate that a canonical completion always exists and can be explicitly constructed. We characterise all possible completions as suitable perturbations of the canonical completion, and determine necessary and sufficient conditions for a unique completion to exist. We interpret the canonical completion via the graphical model structure it induces on the associated Gaussian process. Furthermore, we show how the estimation of the canonical completion reduces to the solution of a system of linear statistical inverse problems in the space of Hilbert–Schmidt operators, and derive rates of convergence. We conclude by providing extensions of our theory to more general forms of domains, and by demonstrating how our results can be used to construct covariance estimators from sample path fragments of the associated stochastic process. Our results are illustrated numerically by way of a simulation study and a real example.},
 author = {Kartik G. Waghmare and Victor M. Panaretos},
 journal = {The Annals of Statistics},
 number = {6},
 pages = {pp. 3281--3306},
 publisher = {Institute of Mathematical Statistics},
 title = {THE COMPLETION OF COVARIANCE KERNELS},
 urldate = {2026-03-10},
 volume = {50},
 year = {2022}
}

@article{liu2020learning,
  author    = {Qianli Liu and Song Liu and Jian Li and Dacheng Tao},
  title     = {Learning Kernel in Maximum Mean Discrepancy Test},
  journal   = {Statistical Analysis and Data Mining: The ASA Data Science Journal},
  volume    = {13},
  number    = {6},
  pages     = {491--503},
  year      = {2020}
}

@article{feldman1958equivalence,
  title={Equivalence and perpendicularity of Gaussian processes},
  author={Feldman, Jacob},
  journal={Pacific J. Math},
  volume={8},
  number={4},
  pages={699--708},
  year={1958}
}

@article{fukumizu2004dimensionality,
  title={Dimensionality reduction for supervised learning with reproducing kernel Hilbert spaces},
  author={Fukumizu, Kenji and Bach, Francis R and Jordan, Michael I},
  journal={Journal of Machine Learning Research},
  volume={5},
  number={Jan},
  pages={73--99},
  year={2004}
}

@article{hajek1958property,
  title={On a property of normal distributions of any stochastic process},
  author={Hajek, Jaroslav},
  journal={Czechoslovak Mathematical Journal},
  volume={8},
  number={4},
  pages={610--618},
  year={1958},
  publisher={Institute of Mathematics, Czech Academy of Sciences}
}

@book{berlinet2011reproducing,
  title={Reproducing kernel Hilbert spaces in probability and statistics},
  author={Berlinet, Alain and Thomas-Agnan, Christine},
  year={2011},
  publisher={Springer Science \& Business Media}
}

@article{aronszajn1950theory,
  title={Theory of reproducing kernels},
  author={Aronszajn, Nachman},
  journal={Transactions of the American mathematical society},
  volume={68},
  number={3},
  pages={337--404},
  year={1950}
}

@inproceedings{torralba2011unbiased,
  title={Unbiased look at dataset bias},
  author={Torralba, Antonio and Efros, Alexei A},
  booktitle={CVPR 2011},
  pages={1521--1528},
  year={2011},
  organization={IEEE}
}

@article{steinwart2001influence,
  title={On the influence of the kernel on the consistency of support vector machines},
  author={Steinwart, Ingo},
  journal={Journal of machine learning research},
  volume={2},
  number={Nov},
  pages={67--93},
  year={2001}
}

@article{hendrycks2019benchmarking,
  title={Benchmarking neural network robustness to common corruptions and perturbations},
  author={Hendrycks, Dan and Dietterich, Thomas},
  journal={arXiv preprint arXiv:1903.12261},
  year={2019}
}

@inproceedings{recht2019imagenet,
  title={Do Imagenet Classifiers Generalize to Imagenet?},
  author={Recht, Benjamin and Roelofs, Rebecca and Schmidt, Ludwig and Shankar, Vaishaal},
  booktitle={International Conference on Machine Learning},
  pages={5389--5400},
  year={2019},
  organization={PMLR}
}

@article{muandet2017kernel,
  title={Kernel mean embedding of distributions: A review and beyond},
  author={Muandet, Krikamol and Fukumizu, Kenji and Sriperumbudur, Bharath and Sch{\"o}lkopf, Bernhard and others},
  journal={Foundations and Trends in Machine Learning},
  volume={10},
  number={1-2},
  pages={1--141},
  year={2017},
  publisher={Now Publishers, Inc.}
}

@article{bach2022information,
  title={Information theory with kernel methods},
  author={Bach, Francis},
  journal={IEEE Transactions on Information Theory},
  volume={69},
  number={2},
  pages={752--775},
  year={2022},
  publisher={IEEE}
}

@book{Scholkopf2002LearningWithKernels,
  author    = {Bernhard Sch{\"o}lkopf and Alexander J. Smola},
  title     = {Learning with Kernels: Support Vector Machines, Regularization, Optimization, and Beyond},
  publisher = {MIT Press},
  year      = {2002},
  url       = {https://mitpress.mit.edu/books/learning-kernels}
}

@article{fredholm1903classe,
  title={Sur une classe d’{\'e}quations fonctionnelles},
  author={Fredholm, Ivar},
  journal={Acta Mathematica},
  volume={27},
  number={1},
  pages={365--390},
  year={1903}
}

@article{simon1977notes,
  title={Notes on infinite determinants of Hilbert space operators},
  author={Simon, Barry},
  journal={Advances in Mathematics},
  volume={24},
  number={3},
  pages={244--273},
  year={1977},
  publisher={Elsevier}
}

@article{hagrass2024spectral,
  title={Spectral regularized kernel two-sample tests},
  author={Hagrass, Omar and Sriperumbudur, Bharath and Li, Bing},
  journal={The Annals of Statistics},
  volume={52},
  number={3},
  pages={1076--1101},
  year={2024},
  publisher={Institute of Mathematical Statistics}
}

@article{shekhar2022permutation,
  title={A permutation-free kernel two-sample test},
  author={Shekhar, Shubhanshu and Kim, Ilmun and Ramdas, Aaditya},
  journal={Advances in Neural Information Processing Systems},
  volume={35},
  pages={18168--18180},
  year={2022}
}

@misc{santoroLRT25,
    author       = {Santoro, Leonardo V. and Panaretos, Victor M.},
    title        = {Likelihood Ratio Tests by Kernel Gaussian Embedding},
    howpublished = {arXiv preprint arXiv:2508.07982},
    year         = {2025}
}

@article{chatterjee2024boosting,
  title={Boosting the power of kernel two-sample tests},
  author={Chatterjee, Anirban and Bhattacharya, Bhaswar B},
  journal={Biometrika},
  pages={asae048},
  year={2024},
  publisher={Oxford University Press}
}

@book{grenander1981abstract,
  title={Abstract Inference},
  author={Grenander, Ulf},
  year={1981},
  publisher={Wiley},
  address={New York}
}

@article{zuo2000general,
  title={General Notions of Statistical Depth Function},
  author={Zuo, Yijun and Serfling, Robert},
  journal={Annals of Statistics},
  pages={461--482},
  year={2000},
  publisher={JSTOR}
}

@article{Shojaeddin12,
author = {Shojaeddin Chenouri and Christopher G. Small},
title = {{A Nonparametric Multivariate Multisample Test Based on Data Depth}},
volume = {6},
journal = {Electronic Journal of Statistics},
number = {none},
publisher = {Institute of Mathematical Statistics and Bernoulli Society},
pages = {760 -- 782},
keywords = {data depth, depth-depth plot, Kruskal-Wallis test, multivariate nonparametric tests},
year = {2012},
doi = {10.1214/12-EJS692},
URL = {https://doi.org/10.1214/12-EJS692}
}

@article{Wilcox01042003,
author = {Rand R. Wilcox},
title = {Two-Sample, Bivariate Hypothesis Testing Methods Based on Tukey's Depth},
journal = {Multivariate Behavioral Research},
volume = {38},
number = {2},
pages = {225--246},
year = {2003},
publisher = {Routledge},
doi = {10.1207/S15327906MBR3802\_4},
}

@article{chen2017new,
  title={A New Graph-Based Two-Sample Test for Multivariate and Object Data},
  author={Chen, Hao and Friedman, Jerome H},
  journal={Journal of the American Statistical Association},
  volume={112},
  number={517},
  pages={397--409},
  year={2017},
  publisher={Taylor \& Francis}
}

@article{Henze88,
author = {Norbert Henze},
title = {{A Multivariate Two-Sample Test Based on the Number of Nearest Neighbor Type Coincidences}},
volume = {16},
journal = {The Annals of Statistics},
number = {2},
publisher = {Institute of Mathematical Statistics},
pages = {772 -- 783},
keywords = {Multivariate two-sample test, nearest neighbor-type coincidences},
year = {1988},
doi = {10.1214/aos/1176350835},
URL = {https://doi.org/10.1214/aos/1176350835}
}

@book{paulsen2016introduction,
  title={An introduction to the theory of reproducing kernel Hilbert spaces},
  author={Paulsen, Vern I and Raghupathi, Mrinal},
  volume={152},
  year={2016},
  publisher={Cambridge university press}
}

@article{waghmare2023functionalgraphical,
  title={The functional graphical lasso},
  author={Waghmare, Kartik G and Masak, Tomas and Panaretos, Victor M},
  journal={The Annals of Statistics},
  volume={53},
  number={5},
  pages={1857--1885},
  year={2025},
  publisher={Institute of Mathematical Statistics}
}

\end{document}